
\documentclass[sigconf, nonacm]{acmart}

\newcommand\vldbdoi{10.14778/3547305.3547316}
\newcommand\vldbpages{2111 - 2120}
\newcommand\vldbvolume{15}
\newcommand\vldbissue{10}
\newcommand\vldbyear{2022}
\newcommand\vldbauthors{\authors}
\newcommand\vldbtitle{\shorttitle}
\newcommand\vldbavailabilityurl{https://github.com/ccchengff/FDL/tree/main/playground/celu_vfl} 
\newcommand\vldbpagestyle{empty}

\usepackage{amsmath}
\usepackage{amsthm}
\usepackage{amssymb}
\usepackage{mathtools}
\usepackage{bm}
\usepackage{bbm}
\usepackage{booktabs}
\usepackage{xspace}
\usepackage{color,soul}
\usepackage{multirow}
\usepackage{multicol}
\usepackage{caption}
\usepackage{enumitem}

\usepackage{microtype}
\usepackage{graphicx}
\usepackage{subfigure}
\usepackage{booktabs}
\usepackage{hyperref}
\usepackage{nicefrac}
\usepackage[normalem]{ulem}
\usepackage{tikz}
\usetikzlibrary{trees}
\usepackage{float}
\usepackage{subfigure}
\usepackage[ruled,noend,linesnumbered,vlined]{algorithm2e}

\newcommand{\system}{{\texttt{CELU-VFL}}\xspace}

\definecolor{myred}{rgb}{1.0,0.7,0.8}
\definecolor{mygreen}{RGB}{0,166,0}
\definecolor{lightgreen}{rgb}{0.56, 0.93, 0.56}
\definecolor{myorange}{RGB}{252,107,4}
\definecolor{darkgreen}{RGB}{0,153,102}
\definecolor{lightblue}{rgb}{0.53, 0.81, 0.92}
\definecolor{lightgray}{gray}{0.9}

\newcommand{\mysubsubsection}[1]{{\textbf{\textit{#1}.\xspace}}}

\DeclareMathOperator*{\argmin}{arg\,min}
\newcommand{\eps}{\varepsilon}

\newcommand{\host}{\textit{Party {\small $\mathcal{A}$}}\xspace}
\newcommand{\hosti}{\textit{Party {\small ${\mathcal{A}_i}$}}\xspace}
\newcommand{\guest}{\textit{Party {\small $\mathcal{B}$}}\xspace}

\newcommand{\anyparty}{\textit{Party} {\small $\mathcal{P}$}\xspace}

\newcommand{\xa}{X_A}

\newcommand{\xb}{X_B}
\newcommand{\xp}{X_P}
\newcommand{\za}{Z_A}
\newcommand{\zai}{Z_{A_i}}
\newcommand{\zb}{Z_B}
\newcommand{\zp}{Z_P}
\newcommand{\wba}{\theta_{Bot\_A}}
\newcommand{\wbai}{\theta_{Bot\_{A_i}}}
\newcommand{\wbb}{\theta_{Bot\_B}}
\newcommand{\wbp}{\theta_{Bot\_P}}
\newcommand{\wt}{\theta_{Top}}

\newtheorem{theorem}{Theorem}

\newtheorem{lemma}[theorem]{Lemma}

\newtheorem{claim}{Claim}

\newtheorem*{assumption*}{\assumptionnumber}
\providecommand{\assumptionnumber}{}
\makeatletter
\newenvironment{assumption}[1]
 {%
  \renewcommand{\assumptionnumber}{Assumption #1}%
  \begin{assumption*}%
  \protected@edef\@currentlabel{#1}%
 }
 {%
  \end{assumption*}
 }
\makeatother

\newenvironment{subproof}[1][\proofname]{%
  \begin{proof}[#1]%
}{%
  \end{proof}%
}

\begin{document}

\title{Towards Communication-efficient Vertical Federated Learning Training via Cache-enabled Local Updates}

\author{Fangcheng Fu}
\authornote{School of Computer Science \& Key Lab of High Confidence Software Technologies (MOE), Peking University}
\affiliation{
\institution{Peking University}
}
\email{ccchengff@pku.edu.cn}

\author{Xupeng Miao}
\authornotemark[1]
\affiliation{
\institution{Peking University}
}
\email{xupeng.miao@pku.edu.cn}

\author{Jiawei Jiang}
\authornote{School of Computer Science, Wuhan University}
\affiliation{
\institution{Wuhan University}
}
\email{jiawei.jiang@whu.edu.cn}

\author{Huanran Xue}
\affiliation{
\institution{Tencent Inc.}
}
\email{huanranxue@tencent.com}

\author{Bin Cui}
\authornotemark[1]
\authornote{Institute of Computational Social Science, Peking University (Qingdao), China}
\affiliation{
\institution{Peking University}
}
\email{bin.cui@pku.edu.cn}

\begin{abstract}
Vertical federated learning (VFL) is an emerging paradigm that 
allows different parties (e.g., organizations or enterprises) to 
collaboratively build machine learning models with privacy protection. 
In the training phase, VFL only exchanges the intermediate statistics, 
i.e., forward activations and backward derivatives, 
across parties to compute model gradients. 
Nevertheless, due to its geo-distributed nature, 
VFL training usually suffers from the low WAN bandwidth. 

In this paper, we introduce \system, a novel and efficient VFL training framework 
that exploits the local update technique to reduce the cross-party communication rounds. 
\system caches the stale statistics and reuses them to estimate model gradients 
without exchanging the ad hoc statistics. 
Significant techniques are proposed to improve the convergence performance. 
First, to handle the stochastic variance problem, 
we propose a uniform sampling strategy to 
fairly choose the stale statistics for local updates. 
Second, to harness the errors brought by the staleness, 
we devise an instance weighting mechanism that measures the reliability 
of the estimated gradients. 
Theoretical analysis proves that \system achieves a similar sub-linear convergence rate 
as vanilla VFL training but requires much fewer communication rounds. 
Empirical results on both public and real-world workloads 
validate that \system can be up to six times faster than the existing works. 
\end{abstract}

\maketitle

\pagestyle{\vldbpagestyle}
\begingroup\small\noindent\raggedright\textbf{PVLDB Reference Format:}\\
\vldbauthors. \vldbtitle. PVLDB, \vldbvolume(\vldbissue): \vldbpages, \vldbyear.\\
\href{https://doi.org/\vldbdoi}{doi:\vldbdoi}
\endgroup
\begingroup
\renewcommand\thefootnote{}\footnote{\noindent
This work is licensed under the Creative Commons BY-NC-ND 4.0 International License. Visit \url{https://creativecommons.org/licenses/by-nc-nd/4.0/} to view a copy of this license. For any use beyond those covered by this license, obtain permission by emailing \href{mailto:info@vldb.org}{info@vldb.org}. Copyright is held by the owner/author(s). Publication rights licensed to the VLDB Endowment. \\
\raggedright Proceedings of the VLDB Endowment, Vol. \vldbvolume, No. \vldbissue\ %
ISSN 2150-8097. \\
\href{https://doi.org/\vldbdoi}{doi:\vldbdoi} \\
}\addtocounter{footnote}{-1}\endgroup

\ifdefempty{\vldbavailabilityurl}{}{
\vspace{.3cm}
\begingroup\small\noindent\raggedright\textbf{PVLDB Artifact Availability:}\\
The source code, data, and/or other artifacts have been made available at \url{\vldbavailabilityurl}.
\endgroup
}

\section{Introduction}
\label{sec:intro}

The past few years have witnessed the explosion of 
federated learning (FL) research and applications
~\cite{fl_concepts,fl_challenge,konevcny2016federated_opt,konevcny2016federated_learn}, 
where two or more parties (participants) 
collaboratively train a machine learning (ML) model 
using their own private data. 
In particular, vertical FL (VFL) 
illustrates such a scenario where different parties 
hold disjoint features but overlap on a set of instances. 
As shown in Figure~\ref{fig:vfl}, 
the overlapping instances are regarded as a virtual dataset 
that is vertically partitioned. 
There are two types of parties: 
(i) one or more \host's that merely provide features; 
and (ii) one \guest that holds both features and 
the ground truth labels for the learning tasks. 
The goal of VFL is to fit the labels by 
the features of all parties, 
whilst keeping the private data (including both features and labels) 
of each party secret from the others. 
VFL is suitable for a variety of cross-enterprise collaborations. 
For instance, \host can be an Internet company that 
is able to generate rich and precise user profiling, 
whilst \guest can be an e-commercial company 
that hopes to build a better recommender system\footnote{
In Practice, two or more \host's would be possible. 
However, such a two-party setting is 
the most widely seen circumstance 
for cross-enterprise collaborations. 
Furthermore, our work can be generalized to 
two or more \host's easily. 
Therefore, we only consider one \host in this work 
for simplicity.}.
As a result, VFL is gaining more popularity 
in both academia and industry
~\cite{vf2boost,vfl_lr,secureboost,privacy_vfl_tree}.

\begin{figure}[!t]
\centering
\includegraphics[width=2.8in]{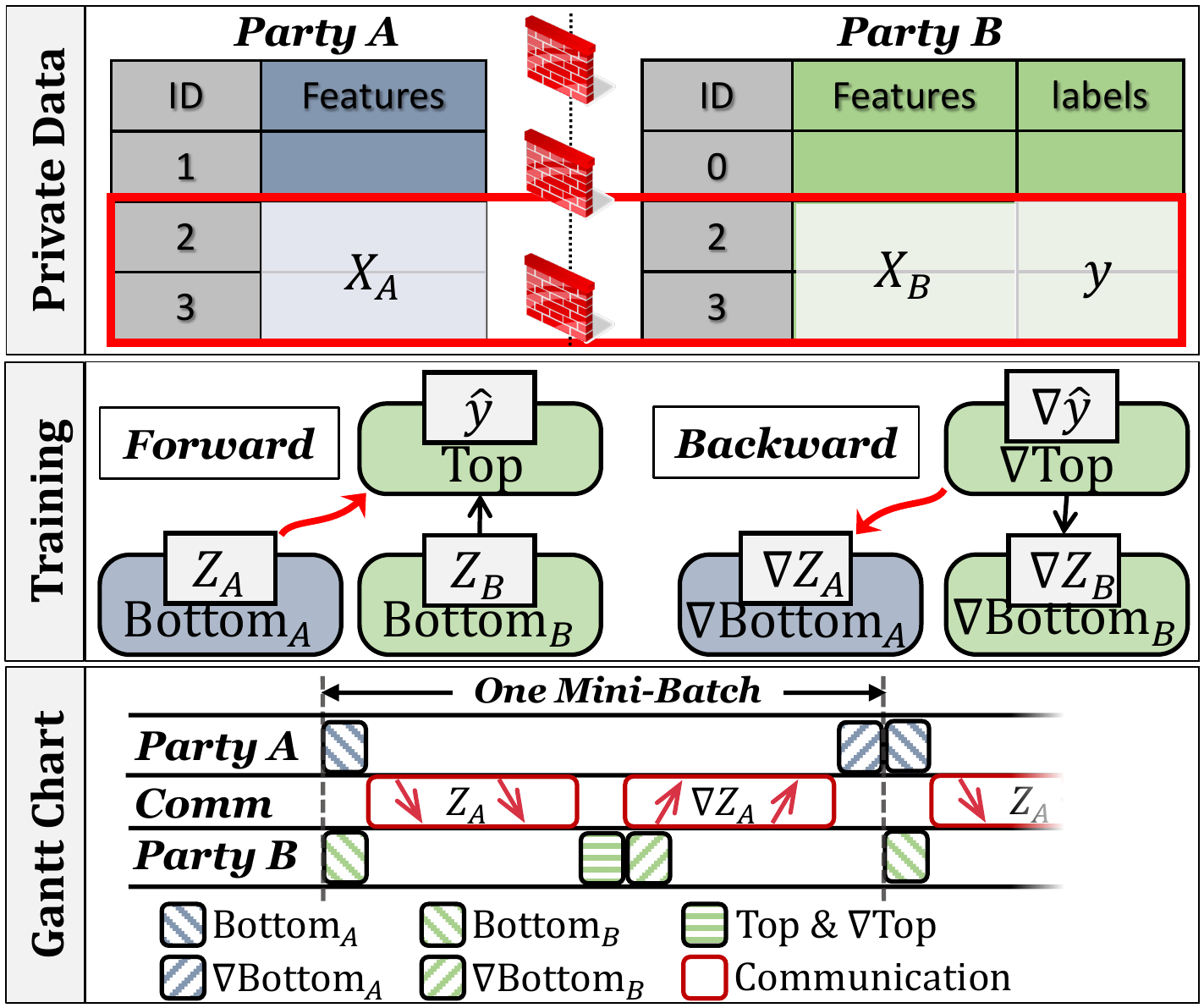}
\caption{\small{An illustration of VFL. The cross-party communication (indicated by red arrows) dominates the time cost of training.}}
\label{fig:vfl}
\end{figure}

{\textbf{(Communication Bottleneck of VFL)}}
Denote {\small $y$} as the labels owned by \guest, 
and {\small$\xa, \xb$} as the features 
of \host and \guest, respectively, 
where the $k$-th rows {\small$X_{A,k}, X_{B,k}$} 
together form the features of the {$k$}-th instances. 
As shown in Figure~\ref{fig:vfl}, 
VFL models typically follow a bottom-to-top architecture. 
First, each \anyparty is associated with a bottom model 
that works as a feature extractor. 
It takes {\small$\xp$} as input 
and outputs {\small$\zp = \texttt{Bottom}_P(\xp; \wbp)$}, 
where {\small$\wbp$} are the bottom model weights. 
Second, \guest is further associated with a top model 
parameterized by {\small$\wt$} that makes predictions via 
{\small$\hat{y} = \texttt{Top}(\za,\zb; \wt)$}. 
The goal of VFL training is to find the model weights 
minimizing the empirical risk: 
\begin{equation}
\label{eq:vfl}
\small
	\argmin_{\left\{\wba, \wbb, \wt\right\}} \; L \coloneqq 
	\sum_k l(y_k, \hat{y}_k), 
\end{equation}
where {\small$l$} is the loss function (e.g., the logistic loss).

The most common avenue to solve such a 
supervised ML problem 
is to adopt the mini-batch stochastic gradient descent 
(SGD) algorithm or its variants. 
For each step, a batch of instances is sampled 
to compute the model gradients 
{\small$\nabla\theta = \partial L / \partial\theta$} 
for model updates. 
However, since \host must not get access to the labels 
or the top model due to the privacy consideration, 
it cannot accomplish the backward propagation process alone. 
To overcome this problem, existing VFL frameworks  
adopt a two-phase propagation routine --- 
the gradients for {\small$\wba$} are calculated via 
{\small$\nabla \wba = \nabla \za \frac{\partial \za}{\partial \wba}$}, 
where {\small$\nabla \za = \partial L / \partial \za$} 
are provided by \guest. 

As shown in the Gantt chart of Figure~\ref{fig:vfl}, 
in each step, two kinds of intermediate data are transmitted, 
i.e., \underline{forward activations} {\small$\za$} 
and \underline{backward derivatives} {\small$\nabla\za$}. 
In practice, the communication cost is expensive due to
the limited network bandwidth. 
Since the data centers of different parties 
are usually physically distributed, 
VFL tasks have to run in a geo-distributed manner 
where Wide Area Network (WAN) bandwidth is very low. 
For instance, as reported by~\citet{gaia_nsdi} and verified by our empirical benchmark, 
the bandwidth between different data centers 
are usually smaller than 300 Mbps.
Worse still, since most companies 
have network restrictions 
to avoid external attacks, 
the servers that carry out the VFL tasks 
are forbidden from connecting to WAN directly. 
Instead, messages are proxied by some gateway machines, 
leading to even slower communication. 
Eventually, in real-world VFL applications, 
we observe that over 90\% of 
the total training time is spent on communication, 
leading to extremely low efficiency 
and a huge waste of computational resources. 
Consequently, how to tackle the communication bottleneck 
in VFL is a timely and important problem.

\begin{algorithm}[!t]
\caption{{\small{VFL training with local updates (FedBCD~\cite{fedbcd}) for \host. Line 4-5 can be executed in background to overlap the communication and local updates.}}}
\small
\label{alg:vfl_with_local_updates}

\ForEach{\textup{mini-batch} $i$}
{
	Compute and send {\small$\za^{(i)}$}; 
	Recv {\small$\nabla\za^{(i)}$}\;
	Update with {\small$\nabla \wba = \nabla \za^{(i)} \frac{\partial \za^{(i)}}{\partial \wba}$}\;
	\For(\tcp*[h]{local updates}){$j \gets 1 \textup{ until } R$}
	{
		Compute {\small$\za^{(i, j)}$};
		Update with {\small$\widetilde{\nabla} \wba = \nabla \za^{(i)} \frac{\partial \za^{(i, j)}}{\partial \wba}$}\;
	}
}
\end{algorithm}

\textbf{(Motivation)}
To address the communication bottleneck, 
an intuitive idea is to amortize the network cost by 
performing multiple local updates between two communication rounds.
In fact, such a local update technique (a.k.a. local SGD) 
has been widely studied in conventional (non-federated) distributed ML and horizontal FL
~\cite{local_sgd_1,local_sgd_2,konevcny2016federated_opt,konevcny2016federated_learn}. 
However, applying this technique in VFL is non-trivial 
since each party cannot accomplish 
the forward or backward propagation individually 
due to the lack of {\small$\za$} or {\small$\nabla \za$}. 
Thus, approximating {\small$\za$} or {\small$\nabla \za$} with stale statistics is needed. 
Specifically, FedBCD~\cite{fedbcd} proposes to execute {\small$R$} updates for each mini-batch, 
which is shown in Algorithm~\ref{alg:vfl_with_local_updates}.
During the {$j$}-th update of the {$i$}-th mini-batch, 
instead of exchanging the ad hoc values, 
i.e., {\small$\langle \za^{(i,j)},\nabla\za^{(i,j)} \rangle$}, 
\host treats {\small$\nabla\za^{(i)}$} as an approximation and 
estimates the model gradients as 
{\small$\widetilde{\nabla}\wba = \nabla\za^{(i)} \frac{\partial\za^{(i,j)}}{\partial\wba}$}; 
symmetrically, \guest feeds the stale values {\small$\za^{(i)}$} 
and the ad hoc values {\small$\zb^{(i,j)}$} to the top model.

Nevertheless, there are two drawbacks of local updates in VFL:
(1. variance problem) 
using the same stale mini-batch for multiple consecutive steps 
would enlarge the stochastic variance in model gradients 
(similar phenomenon can be observed in SGD algorithms without data shuffling~\cite{shuffle1,shuffle2,corgipile});
(2. approximation errors) 
because approximation via stale statistics inevitably introduces 
errors to the model gradients, 
it would slow down the convergence or even lead to divergence 
if we directly update the models with the approximated gradients. 
According to the above analysis, there exists a discrepancy 
in the state of VFL with local updates --- 
doing more local steps brings a better improvement on the \textit{system efficiency}, 
but can easily harm the \textit{statistical efficiency}.

\textbf{(Summary of Contributions)}
Motivated by this, our work proposes a novel VFL training framework that
improves both system efficiency and statistical efficiency 
at the same time. 
To summarize:
\begin{itemize}[leftmargin=*]
\item 
We develop \system, 
an efficient VFL training framework that 
exploits a cache-enabled local update technique 
for better system efficiency. 
To reduce the communication cost, 
\system introduces an abstraction of workset table that 
caches the forward activations and backward derivatives 
to enable the local update technique. 
\system further overlaps the reads and writes of the workset table 
triggered by local computation and cross-party communication 
to improve resource utilization.

\item 
To improve the statistical efficiency, 
\system addresses the two drawbacks of local updates in VFL  
based on the workset table abstraction. 
(1) First, to tackle the variance problem, 
we introduce a uniform local sampling strategy --- 
instead of repetitively using the same mini-batch, 
the cached statistics are fairly sampled from the workset table 
to execute the local updates. 
(2) Second, to mitigate the impact of approximation errors, 
we design a staleness-aware instance weighting mechanism. 
The essential idea is to measure the staleness using cosine similarities 
and control the contribution of each instance according to its staleness. 

\item 
Furthermore, we theoretically analyze the convergence property of \system, 
which shows that our work achieves a similar sub-linear convergence rate 
as vanilla VFL training but requires much fewer communication rounds. 
The analysis also interprets why our uniform local sampling strategy 
and instance weighting mechanism improve the statistical efficiency. 

\item 
We empirically evaluate the performance of \system through extensive experiments. 
First, we investigate the statistical efficiency of \system along three dimensions, 
i.e., the number of local updates, the strategy of local sampling, 
and the weighting mechanism of staleness. 
Second, end-to-end evaluation on both public datasets and real-world workloads 
demonstrates that \system is able to outperform the existing works 
by 2.65-6.27$\times$. 

\end{itemize}

\section{Preliminaries}
\label{sec:bg}

\subsection{Vertical Federated Learning}
Owing to the establishment of lawful regulation 
on privacy protection~\cite{gdpr,ccpa,cdpa}, 
how to train ML models with privacy preservation 
is becoming a hot topic. 
Notably, VFL has become a prevailing paradigm 
that unites the features from different parties 
for collaborative applications
~\cite{vfl_lr,blindfl,vf2boost,secureboost,privacy_vfl_tree,fdml,privacy_grad,vfl_label_protect,fl_concepts}. 

\mysubsubsection{Privacy Preservation}
Unlike conventional distributed ML, 
raw data are not collected together in VFL, 
and model weights or updates are not transferred 
between parties, either. 
Instead, only the intermediate statistics, 
i.e., forward activations and backward derivatives, 
are exchanged between parties 
to protect the data and models. 
Many existing works have theoretically analyzed the 
privacy guarantees of VFL under the honest-but-curious assumption
~\cite{async_vfl_2,async_vfl_3,vafl,vfl_label_protect}. 

\mysubsubsection{Data Management}
Under the VFL setting, the data in both parties are assumed to be vertically aligned. 
In other words, the overlapping instances 
(e.g., instances 2 and 3 in Figure~\ref{fig:vfl}) 
have been extracted and aligned in the same ordering before training. 
This can be done by the private set intersection (PSI) technique~\cite{psi1,psi2,psi3}. 
Then, both parties can sample the mini-batches using the same random seed 
to ensure that each mini-batch is also aligned.

\mysubsubsection{The Cross-party Communication Bottleneck}
As introduced in Section~\ref{sec:intro}, since different parties 
are usually geo-distributed and the WAN bandwidth is limited in practice, 
the cross-party communication cost inevitably becomes the major bottleneck in VFL. 
Take a real case as an example, where the output dimensionality of {\small$\za$} is 256 
and the batch size is 4096. 
The size of {\small$\za$} or {\small$\nabla\za$} 
will be 4MB using single-precision floating-point numbers. 
Eventually, each communication round (including two transmissions) 
will take 213 milliseconds given a 300Mbps network bandwidth. 
In contrast, the computation time is much shorter thanks to 
the explosive growth of modern accelerators (e.g., GPUs) in the past decade. 
As a result, how to reduce the communication cost and improve the utilization 
of computational resources 
has become an urgent issue in VFL.

\subsection{Local Update}
To address the communication deficiency in distributed training, 
umpteenth works are developed to reduce the communication rounds 
by conducting local updates
~\cite{local_sgd_1,local_sgd_2,konevcny2016federated_opt,konevcny2016federated_learn,fedprox}. 
However, they mainly focus on distributed ML or horizontal FL (HFL), 
where the datasets are horizontally partitioned 
among different machines (training workers or clients). 
Under such a setting, the label and features for the same instance are co-located 
so that each machine can accomplish 
the forward and backward propagation alone. 

However, in VFL, parties need to fetch 
the forward activations or backward derivatives 
in each step. 
To apply the local update technique in VFL, 
we have to approximate the intermediate statistics 
rather than fetching from the other party. 
Since ML models are trained iteratively, there is a common sense that 
the models do not change significantly in a few iterations 
and are robust to certain noises. 
Thus, \citet{fedbcd} propose to approximate the ad hoc statistics 
with the stale ones. 
Specifically, they reuse the statistics of the last batch for multiple iterations 
and update the models with the estimated gradients directly. 
However, as discussed in Section~\ref{sec:intro}, 
the convergence would be harmed due to 
the enlarged stochastic variance and the approximation errors 
in local updates.

\begin{figure*}[!t]
\centering
\includegraphics[width=6in]{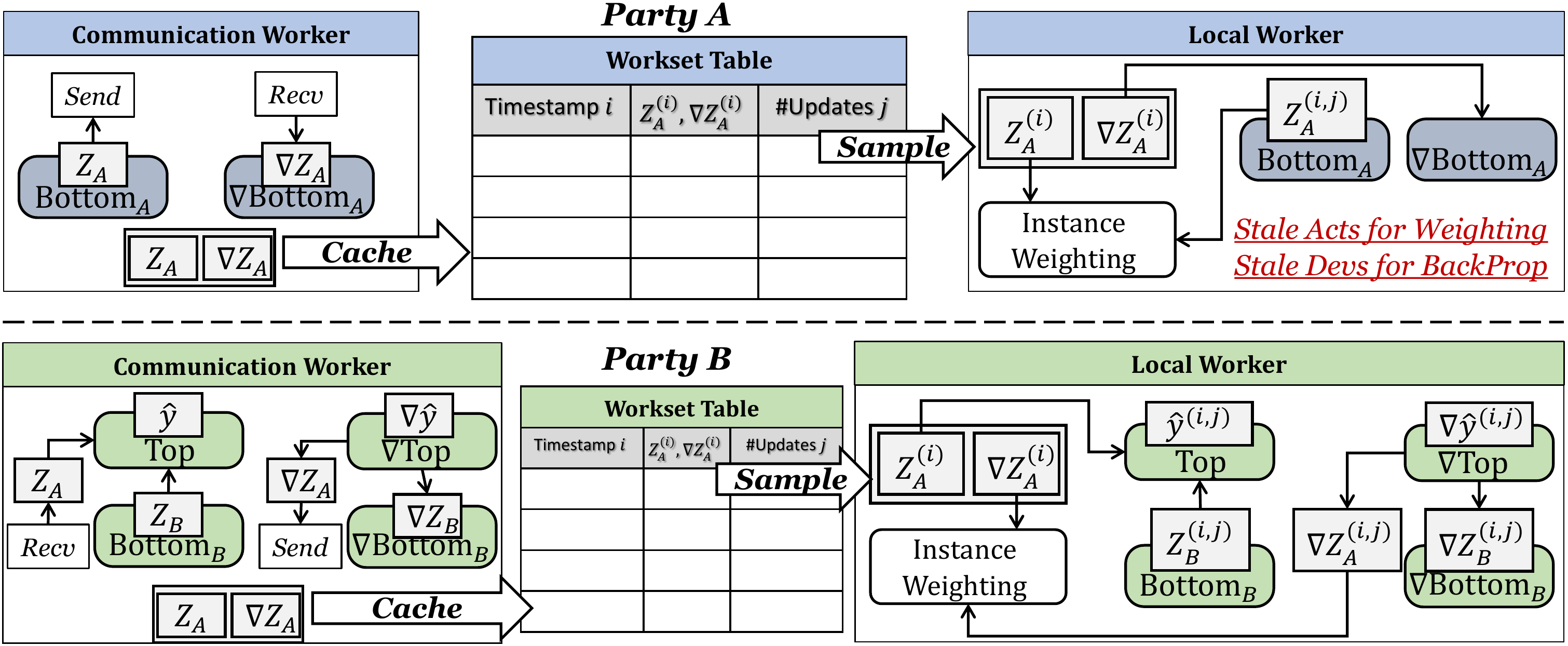}
\caption{Overview of \system.}
\label{fig:overview}
\end{figure*}

\section{\system}
\label{sec:method}

In this section, we first introduce \system, 
a novel VFL framework that enables local updates via cached statistics. 
Then, we propose two techniques to 
address the two drawbacks of local updates in VFL accordingly. 
We first present the frequently used notations: 
\begin{itemize}[leftmargin=*]
\item $B$: size of mini-batches;
\item $R$: maximum number of updates for each mini-batch; 
\item $W$: size of the workset table (number of cached mini-batches); 
\item $\xi$: the threshold in our instance weighting mechanism; 
\item {\small$\za^{(i)}, \nabla\za^{(i)}$}: the forward activations and backward derivatives of \host of the $i$-th batch; 
\item {\small$\za^{(i,j)}, \nabla\za^{(i,j)}$}: the forward activations and backward derivatives of \host of the $i$-th batch at the $j$-th updates. 
\end{itemize}

\subsection{Overview}
Figure~\ref{fig:overview} illustrates the overview of \system. 
There are three major components, as introduced below. 

\textit{The workset table} 
is in charge of the maintenance of 
the cached, stale statistics. 
It records two ``clocks'' for each cached batch: 
(i) the timestamp when this batch is inserted, 
and (ii) the current number of updates done by this batch. 
The capacity of the table is $W$, 
i.e., during the insertion at time $i$, 
we discard cached batches inserted before time $i - W + 1$ 
to control the maximum staleness. 
Whenever a cached batch is sampled for one local update, 
we increment the second clock correspondingly. 
Cached batches that reach the maximum number of updates, 
i.e., $R$, 
will be dropped as well. 

\textit{The communication worker} 
is responsible for 
the message transmission between parties. 
In the beginning, both parties agree on 
the ordering of mini-batches. 
For each batch, the communication workers 
execute the conventional forward and backward propagation 
to exchange {\small$\za$} and {\small$\nabla\za$}. 
Then, both {\small$\za$} and {\small$\nabla\za$} 
are inserted into the workset table 
in order to execute local updates. 
The timestamp $i$ is set as 
the current number of communication rounds, 
i.e., the current number of processed batches. 

\textit{The local worker} 
performs local updates via the cached statistics. 
For each local step, the local worker of each party 
chooses one cached entry 
{\small$\langle i, \za^{(i)}, \nabla\za^{(i)}, j \rangle$} 
from the workset table, 
where $i$ identifies the corresponding mini-batch 
and $j$ denotes how many times it has been used. 
Unlike FedBCD that trains on the same mini-batch 
for multiple consecutive steps, 
\ul{we exploit a round-robin local sampling strategy 
to reduce the stochastic variance}, 
which will be described in Section~\ref{sec:local_sampling}. 
In addition to executing the local propagation, 
\ul{we develop a mechanism that weights the instances 
to mitigate the impact brought by staleness}. 
For instance, as depicted in Figure~\ref{fig:overview}, 
\host feeds the stale derivatives {\small$\nabla\za^{(i)}$} to 
the backward propagation process
and utilizes the stale activations {\small$\za^{(i)}$} to 
measure the weights of instances. 
We will introduce the weighting mechanism in depth 
in Section~\ref{sec:ins_weighting}. 
Furthermore, we let the two types of workers run concurrently 
to make full use of 
both computation and communication resources.

\subsection{Round-Robin Local Sampling}
\label{sec:local_sampling}

It is well known that stochastic optimization algorithms 
have inherent variance in the mini-batch gradients
~\cite{pmlr-v70-allen-zhu17a,svrg,lazygcn}. 
As shown in Figure~\ref{fig:mini_batch}, 
if each mini-batch is used for consecutive steps, 
the stochastic variance will be accumulated 
and hinder the convergence. 
Such stochastic variance can be mitigated 
if we apply the mini-batches uniformly, 
e.g., by alternating the mini-batches. 
Motivated by this, we introduce 
a workset table that stores $W > 1$ mini-batches 
to allow uniform local sampling. 
It is worthy to note that since FedBCD repetitively 
uses the same mini-batch for consecutive local updates, 
it is analogous to the simplest case of $W = 1$. 

Nevertheless, we observe that 
simply letting $W > 1$ still cannot deal with 
the repetitive training issue. 
The reason is that the training workers are running 
too much faster than the communication workers. 
For instance, the top row of 
Figure~\ref{fig:local_sampling} 
depicts a case where $W=R=3$. 
Whenever a mini-batch is inserted into the workset, 
the training worker will immediately 
finish the three local updates 
before the next mini-batch is ready. 
As a result, the local sampling downgrades to 
the repetitive training pattern. 

To solve this problem, we devise a simple but effective 
round-robin local sampling strategy, 
which enforces that each mini-batch in the workset 
cannot be sampled again in the next $W - 1$ steps. 
As depicted in the bottom row of 
Figure~\ref{fig:local_sampling}, 
with the round-robin strategy, 
the training worker samples the mini-batches 
one by one, guaranteeing the uniformity. 
Although bubbles are incurred 
in the first $W - 1$ communication rounds, 
they are negligible since 
$W$ is far smaller than 
the total number of communication rounds.

\begin{figure}[!t]
\begin{minipage}{.45\textwidth}
\centering
\includegraphics[width=3in]{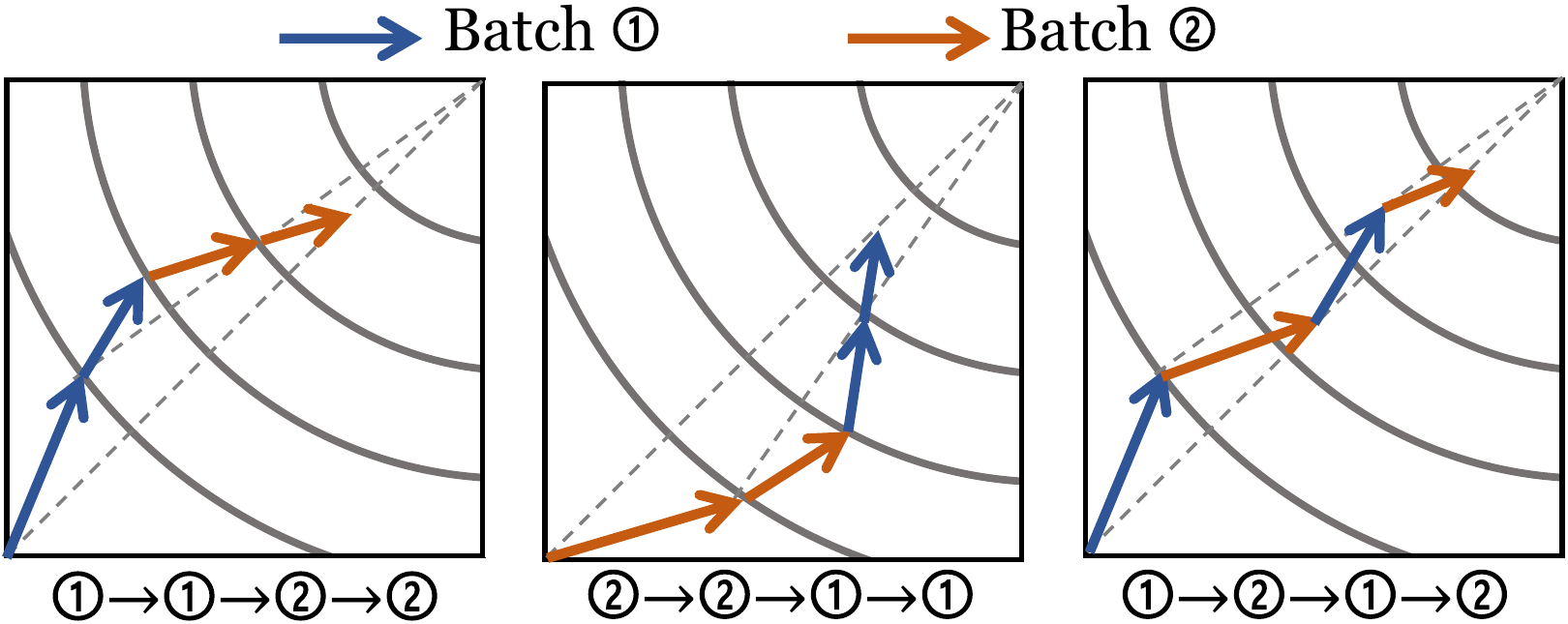}
\captionof{figure}{An illustration of the impact of mini-batches ordering. 
Both of the two mini-batches 
(indicated in blue and orange, respectively) 
are trained for two steps. 
Training in an alternating way (right-most) 
converges better than 
training in a repetitive way (left-most and middle).}
\label{fig:mini_batch}
\end{minipage}
\begin{minipage}{.45\textwidth}
$ $
\end{minipage}
\begin{minipage}{.45\textwidth}
\centering
\includegraphics[width=3in]{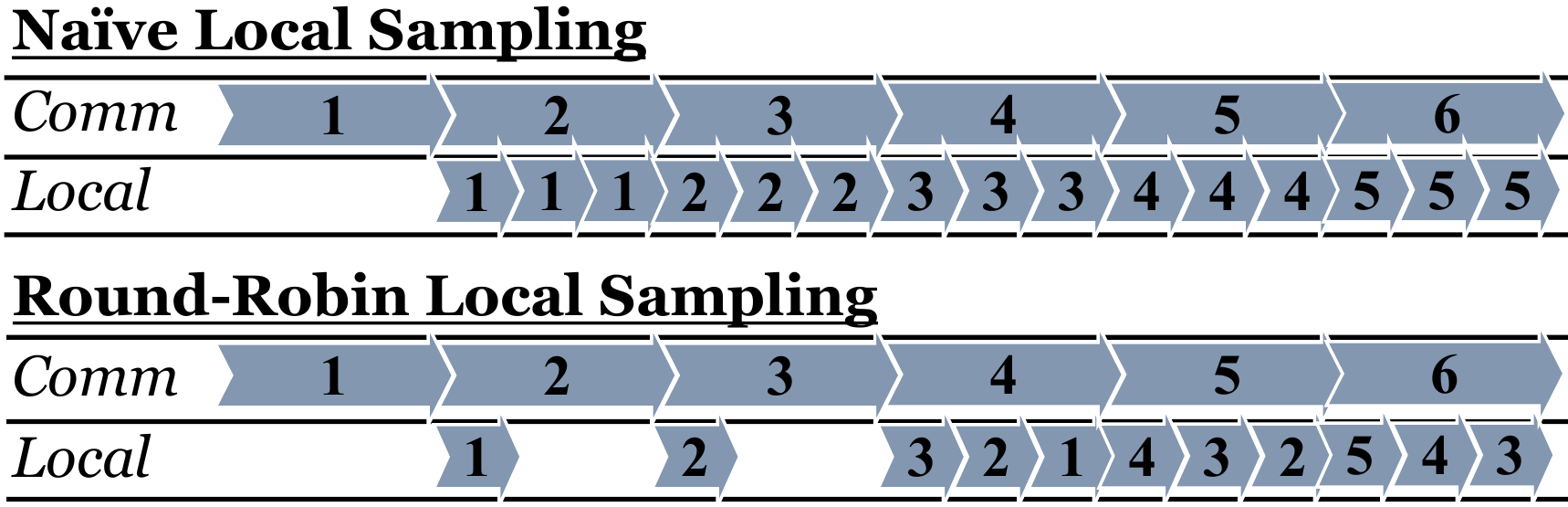}
\captionof{figure}{Na\"ive vs. round-robin local sampling.}
\label{fig:local_sampling}
\end{minipage}
\end{figure}

\mysubsubsection{Discussion}
Readers might suspect that whether the convergence 
would be slowed down as the maximum staleness 
of \system becomes $W \times R$, 
whilst that of FedBCD is only $R$. 
However, as we will analyze in Section~\ref{sec:converge_analysis} 
and evaluate in Section~\ref{sec:expr}, 
although the round-robin strategy increases the maximum staleness, 
the fairness in local sampling is beneficial to convergence 
from two aspects. 
First, given the same $R$, training with $W > 1$ 
converges faster than $W = 1$. 
Second, \system can support a larger $R$ 
whilst FedBCD diverges, 
so the resource utilization can be improved as well. 

In addition, there are many alternatives 
to achieve uniform sampling, 
such as randomly drawing the instances inside the workset table. 
However, we treat each pre-determined mini-batch as a whole 
to be more implementation-friendly --- 
since the mini-batches are inserted at different steps, 
it is easier to maintain the ``clocks'' of each instance. 
Furthermore, we can randomly shuffle 
the entire training dataset before training 
to ensure that the instances inside the workset table 
are also in random order.

\begin{algorithm}[!t]
\caption{\small{Procedures of 
local update with our instance weighting mechanism. 
$\cos\xi$ is a pre-defined threshold 
and $\odot$ denotes element-wise multiplication.}}
\small
\label{alg:local_update}
\SetKwFunction{InsWeight}{InsWeight}
\SetKwFunction{FP}{ForwardProp}
\SetKwFunction{BP}{BackwardProp}
\SetKwFunction{Loss}{Loss}
\SetKwFunction{LocalUpdateA}{LocalUpdatePartyA}
\SetKwFunction{LocalUpdateB}{LocalUpdatePartyB}
\SetKwProg{Fn}{Function}{:}{}

\Fn{\InsWeight{$\textup{ad hoc } V^{(i,j)}, 
				\textup{stale } V^{(i)}$}}
{
	$weights \gets \cos(V^{(i,j)}, V^{(i)}, axis=1)$\;
	$weights[weights < \cos\xi] \gets 0$\;
	\Return $weights$
}

\Fn{\LocalUpdateA{$i, \za^{(i)}, \nabla\za^{(i)}, j$}}
{
	$\za^{(i,j)} \gets \FP(\xa^{(i)})$\;
	$ins\_weights \gets \InsWeight(\za^{(i,j)}, \za^{(i)})$\;
	$\BP(ins\_weights \odot \nabla\za^{(i)})$\;
}

\Fn{\LocalUpdateB{$i, \za^{(i)}, \nabla\za^{(i)}, j$}}
{
	$\hat{y}^{(i,j)} \gets \FP(\za^{(i)}, \xb^{(i)})$\;
	$loss^{(i,j)} \gets \Loss(y^{(i)}, \hat{y}^{(i,j)}, axis=1)$\;
	$\nabla\za^{(i,j)} \gets 
		\partial loss^{(i,j)} / \partial \za^{(i)}$\;
	$ins\_weights \gets \InsWeight(\nabla\za^{(i,j)}, \nabla\za^{(i)})$\;
	$\BP(ins\_weights \odot loss^{(i,j)})$\;
}
\end{algorithm}

\subsection{Instance Weighting Mechanism}
\label{sec:ins_weighting}

Undoubtedly, approximation with the stale statistics causes 
errors in the estimated model gradients. 
Although there have been various works 
discussing that gradient optimization algorithms 
are robust to certain noises in gradients, 
the model performance will be harmed inevitably 
when the staleness is large. 
To tackle this problem, we desiderate 
a fine-grained mechanism that measures the reliability of 
each cached instance according to its staleness. 

As we will analyze in Section~\ref{sec:converge_analysis}, 
the convergence speed 
is highly related to the angles between 
the approximated gradients calculated using stale statistics 
and the expected gradients calculated using ad hoc statistics. 
Therefore, it is a straightforward idea to 
leverage the cosine similarity to 
measure the staleness. 
Nevertheless, it is impossible to 
calculate the expected gradients 
for local updates since the ad hoc statistics 
are not transmitted between parties, 
making the cosine similarity unmeasurable. 
Consequently, we heuristically 
exploit the cosine similarity measurement 
to the forward activations and backward derivatives, 
rather than the model gradients. 

To be specific, our instance weighting mechanism is 
shown in Algorithm~\ref{alg:local_update}. 
We measure the cosine similarities 
between the ad hoc statistics and the stale ones, 
and use the similarities as instance weights. 
For \host, since the ad hoc forward activations 
are computed for each local update, 
we measure the cosine similarities 
by {\small$\cos(\za^{(i,j)}, \za^{(i)})$}, 
where the cosine operation applies on every row 
for each instance individually. 
In symmetric, \guest measures the cosine similarities 
via the backward derivatives\footnote{
Although the algorithm incurs the extra computation for 
{\small$\nabla\za^{(i,j)}$}, which is not needed 
in non-weighted local updates, 
the extra cost is negligible and worthwhile.}, 
i.e., {$\cos(\nabla\za^{(i,j)}, \nabla\za^{(i)})$}. 
Furthermore, we introduce a lower bound $\cos\xi$ 
as the threshold, 
i.e., for instances whose similarities 
are smaller than $\cos\xi$, 
their weights will be set as zero. 
Finally, the backward propagation is executed 
in a weighted manner so that 
the model gradients will be computed 
in the weighted-averaged fashion.

\mysubsubsection{Discussion}
To help readers better understand 
the rationality of such a heuristic method, 
we use a fully connected layer 
{\small$z_{out} = z_{in} \theta$} as an example, 
where {\small$\theta \in \mathbb{R}^{d_{in} \times d_{out}}$} 
denotes the model weights 
and {\small$z_{in} \in \mathbb{R}^{d_{in}}, 
z_{out} \in \mathbb{R}^{d_{out}}$} 
denote the inputs and outputs of one instance, respectively. 
During the backward propagation, 
the model gradients are computed via 
the outer product between forward activations and 
backward derivatives, i.e., 
{\small$\nabla \theta = z_{in}^T \nabla z_{out}$}. 
For \host, suppose the stale derivatives 
are denoted as {\small$\widetilde{\nabla} {z}_{out}$}, 
it is easy to know that 
{\small$\cos(
    \nabla \theta, 
    \widetilde{\nabla} \theta
) = \cos(
	\nabla z_{out}, 
	\widetilde{\nabla} {z}_{out}
)$}\footnote{Here we assume 2-dimensional matrices 
({\small$\nabla \theta, \widetilde{\nabla} \theta$}) 
are flattened before the computation of cosine similarities.}, 
where {\small$\widetilde{\nabla} \theta = z_{in}^T \widetilde{\nabla} {z}_{out}$}. 
Similarly, we have the same property for \guest. 
Although this is only valid for fully connected layers, 
it provides a hint that measuring the similarities 
via forward activations and backward derivatives 
is reasonable and achieves promising results.

\section{Analysis}
\label{sec:analysis}


\subsection{Convergence Analysis}
\label{sec:converge_analysis}
When applying local updates in VFL training, 
we are actually solving the problem defined in Equation~\ref{eq:vfl} 
with approximated gradients. 
For simplicity, we re-write it as a more general 
(non-convex) empirical risk minimization (ERM) problem, i.e., 
\begin{equation*}
\small
\centerline{
    $\min_{{\theta\in\mathbb{R}^d}} f(\theta) = 
	    \frac{1}{N} \sum_{k \in \mathcal{D}} f_k(\theta), 
    \;
    {\theta}_{t+1} = {\theta}_{t} - \eta {\tilde{g}}_{t}$,
}
\end{equation*}
where $\eta$ is the step size and 
step $t$ is denoted as the number of updates. 
It is worthy to note that since the models of different parties 
may not be updated for the same number of steps 
due to the local update technique, 
the learning task in \system can be reckoned as 
solving two ERM problems separately. 
To analyze the problem, we denote three types of gradients: 
\begin{itemize}[leftmargin=*]
\item 
{\small$\nabla f(\theta_t) = \frac{1}{N} \sum_{k\in\mathcal{D}} \nabla f_k(\theta_t)$} 
is the full-batch gradient. 
\item 
{\small$g_t = \frac{1}{B} \sum_{k\in\mathcal{B}} \nabla f_k(\theta_t)$} 
is the mini-batch gradient of the batch {\small$\mathcal{B}$} 
subsampled from the workset table {\small$\mathcal{W}$}, 
i.e., {\small$\mathcal{B} \subseteq \mathcal{W} \subseteq \mathcal{D}$}. 
\item 
{\small$\tilde{g}_t$} is the approximated version of {\small$g_t$} 
using stale statistics. 
\end{itemize}

Compared to vanilla SGD training, 
the analysis in our work is more challenging 
due to the two drawbacks of local updates in VFL: 
(1. variance problem) 
$g_t$ is sampled over the workset table instead of the entire training data; 
(2. approximation errors) 
$\tilde{g}_t$ in local updates is approximated 
using the stale information, 
which cannot guarantee unbiasedness, 
i.e., $\mathbb{E}[\tilde{g}_t] \neq \nabla f(\theta_t)$.
Consequently, the expected errors 
can be dissected into two terms: 
\begin{equation*}
\small
	\mathbb{E}\left[\lVert\tilde{g}_t - \nabla f(\theta_t)\rVert^2\right]
	\leq 2 \mathbb{E}\left[\lVert g_t - \nabla f(\theta_t)\rVert^2\right] 
	+    2 \mathbb{E}\left[\lVert\tilde{g}_t - g_t\rVert^2\right],
\end{equation*}
where the first term depicts the stochastic variance due to 
mini-batch sampling 
whilst the second term comes from the approximation 
via stale information. 
Then, we make two assumptions. 
\begin{assumption}{1}
\label{asp:gradient}
For each step $t$, we make these two assumptions: 
\\(1. Uniform sampling) 
$\mathcal{W}$ is uniformly subsampled from $\mathcal{D}$ 
and $\mathcal{B}$ is uniformly subsampled from $\mathcal{W}$.
\\(2. Correct direction) 
The angles between $\tilde{g}_t, g_t$ are bounded, i.e., 
there exists a constant $\rho \in (0,1]$ s.t. 
$\cos(\tilde{g}_t, g_t) \geq \rho$.
\end{assumption}
\begin{assumption}{2}
\label{asp:empirical}
For $\forall k \in \{1, 2, ..., N\}, \theta, \theta^\prime \in \mathbb{R}^d$, 
we make these assumptions following~\citet{pmlr-v70-allen-zhu17a}:
\\(1. $L$-Lipschitz)
$\lVert{\nabla} f_k({\theta}) - {\nabla} f_k({\theta^\prime})\rVert 
\leq L \lVert{\theta}-{\theta^\prime}\rVert$;
\\(2. Bounded moment)
$\mathbb{E}[\lVert{\nabla} f_k({\theta})\rVert^2] \leq \sigma^2$;
\\(3. Existence of global minimum)
$\exists \theta^*$ s.t. $f({\theta}) \geq f(\theta^*)$.
\end{assumption}
Assumption~\ref{asp:gradient} ensures the unbiasedness
of $\mathbb{E}[g_t] = \nabla f(\theta_t)$ and 
guarantees that the descending directions of the approximated gradients 
are convincing. 
Assumption~\ref{asp:empirical} consists of the standard assumptions 
widely used in the convergence analysis of stochastic optimization. 
We present our theoretical result in Theorem~\ref{thm:convergence}. 
\begin{theorem}
\label{thm:convergence}
Under Assumption~\ref{asp:gradient},\ref{asp:empirical} 
and setting the step size appropriately, 
after $T$ steps, select $\bar{\theta}_{T}$ randomly from 
$\{\theta_t\}_{t=0}^{T-1}$. 
With probability at least $1 - \delta$, we have 
{\small$\mathbb{E}\left[\lVert{\nabla f(\bar{\theta}_T)}\rVert^2\right] 
\leq O(\sqrt{\Delta / T})$}, where 
\begin{equation*}
	\Delta = \frac{L^2 \log\left({2d}/{\delta}\right)}{B} 
	\left(1 + \frac{1}{W}\right) 
	+ \sigma^2 (2 - \rho).
\end{equation*}
\end{theorem}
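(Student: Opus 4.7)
The plan is to follow the standard non-convex SGD template, but carefully account for the two sources of error that arise from \system: the two-level uniform sampling (draw $\mathcal{W}$ from $\mathcal{D}$, then $\mathcal{B}$ from $\mathcal{W}$) and the biased approximation $\tilde{g}_t \ne g_t$. I would begin from the $L$-smoothness descent inequality
\[
f(\theta_{t+1}) \leq f(\theta_t) - \eta \langle \nabla f(\theta_t), \tilde{g}_t\rangle + \tfrac{L\eta^2}{2}\|\tilde{g}_t\|^2,
\]
which follows from Assumption~\ref{asp:empirical}(1). Writing $\langle\nabla f(\theta_t),\tilde{g}_t\rangle = \|\nabla f(\theta_t)\|^2 + \langle\nabla f(\theta_t),\tilde{g}_t-\nabla f(\theta_t)\rangle$ and applying Young's inequality yields the extraction of a $-\tfrac{\eta}{2}\|\nabla f(\theta_t)\|^2$ descent term at the cost of a remainder controlled by $\|\tilde{g}_t-\nabla f(\theta_t)\|^2$, while Assumption~\ref{asp:empirical}(2) lets me bound $\|\tilde{g}_t\|^2\leq\sigma^2$ in the quadratic term.

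The next step is to dissect the remainder via the inequality
\[
\|\tilde{g}_t-\nabla f(\theta_t)\|^2 \leq 2\|g_t-\nabla f(\theta_t)\|^2 + 2\|\tilde{g}_t-g_t\|^2,
\]
already previewed in the excerpt. For the first term I would apply a vector Hoeffding/Bernstein concentration bound to the two-level uniform sample: conditioning on $\mathcal{W}$, $g_t-g_{\mathcal{W}}$ concentrates at rate $L^2\log(2d/\delta)/B$ with a union bound over the $d$ coordinates; the outer randomness of $\mathcal{W}\subseteq\mathcal{D}$ contributes an additional $L^2\log(2d/\delta)/(WB)$ term, producing the factor $(1+1/W)$ in $\Delta$, where the $L^2$ arises from a Lipschitz-type uniform bound on individual gradient deviations (Assumption~\ref{asp:empirical}(1)). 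For the approximation term I expand $\|\tilde{g}_t-g_t\|^2=\|\tilde{g}_t\|^2+\|g_t\|^2-2\langle\tilde{g}_t,g_t\rangle$, invoke $\cos(\tilde{g}_t,g_t)\geq\rho$ from Assumption~\ref{asp:gradient}(2) to replace the inner product by $\rho\|\tilde{g}_t\|\|g_t\|$, and bound the resulting norms by $\sigma^2$ using the bounded-moment assumption, obtaining the $\sigma^2(2-\rho)$ contribution.

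Plugging both bounds back and telescoping from $t=0$ to $T-1$ gives
\[
\tfrac{\eta}{2}\sum_{t=0}^{T-1}\mathbb{E}\|\nabla f(\theta_t)\|^2 \leq f(\theta_0)-f^\star + \tfrac{L\eta^2 T}{2}\sigma^2 + \eta T\,\Delta,
\]
where Assumption~\ref{asp:empirical}(3) controls the left-hand telescope. Dividing by $\eta T/2$ and tuning $\eta=\Theta(1/\sqrt{T})$ balances the $1/(\eta T)$ and $\eta$ terms and delivers the claimed $O(\sqrt{\Delta/T})$ rate; the reduction from the averaged gradient norm to $\mathbb{E}\|\nabla f(\bar\theta_T)\|^2$ follows immediately from the uniform random choice of $\bar\theta_T$, while the high-probability $1-\delta$ statement is inherited from the concentration step.

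The main obstacle I anticipate is the first-term concentration argument, because the naive Hoeffding bound only controls a single level of sampling, whereas $g_t$ depends on both $\mathcal{W}$ and $\mathcal{B}$; getting the additive decomposition that yields precisely $(1+1/W)$ (rather than a crude $2/B$) requires conditioning carefully and combining two independent concentration events via a union bound inside a single $\log(2d/\delta)$ factor. A secondary subtlety is that $\tilde{g}_t$ is correlated with the filtration through stale workset entries, so the inner-product decomposition must be done pointwise rather than in expectation, which is exactly why Assumption~\ref{asp:gradient}(2) is stated deterministically as a cosine lower bound rather than as an expectation of the bias.
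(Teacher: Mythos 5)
Your overall architecture coincides with the paper's: the same split $\mathbb{E}\lVert\tilde{g}_t-\nabla f(\theta_t)\rVert^2 \le 2\mathbb{E}\lVert g_t-\nabla f(\theta_t)\rVert^2+2\mathbb{E}\lVert\tilde{g}_t-g_t\rVert^2$, the same two-level concentration argument (condition on $\mathcal{W}$, bound $g_t$ against the workset average at rate $L^2\log(2d/\delta)/B$, then the workset average against $\nabla f(\theta_t)$ at rate $L^2\log(2d/\delta)/(WB)$ — the paper does exactly this via Lemma~\ref{lemma:subsample}, quoted from the lazygcn paper, rather than a from-scratch Hoeffding argument), and the same cosine-plus-bounded-moment treatment of $\lVert\tilde{g}_t-g_t\rVert^2$. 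One small point there: simply dropping the cross term after replacing $\langle\tilde g_t,g_t\rangle$ by $\rho\lVert\tilde g_t\rVert\lVert g_t\rVert$ only gives $2\sigma^2$; to recover the stated $(2-\rho)$ factor you need the paper's rearrangement $(1-\rho)(\lVert\tilde g_t\rVert^2+\lVert g_t\rVert^2)+\rho(\lVert\tilde g_t\rVert-\lVert g_t\rVert)^2\le(2-\rho)\max\{\lVert\tilde g_t\rVert^2,\lVert g_t\rVert^2\}$. Your use of a $\sigma^2$ bound on $\lVert\tilde g_t\rVert^2$ is no looser than the paper's own use of Assumption~\ref{asp:empirical}(2) on the approximated gradient, so that is not an issue relative to the paper.

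The genuine gap is in the final step. From your telescoped inequality
\[
\tfrac{\eta}{2}\sum_{t=0}^{T-1}\mathbb{E}\lVert\nabla f(\theta_t)\rVert^2 \le f(\theta_0)-f^\star+\tfrac{L\eta^2 T}{2}\sigma^2+\eta T\Delta,
\]
dividing by $\eta T/2$ leaves $\tfrac{2(f(\theta_0)-f^\star)}{\eta T}+L\eta\sigma^2+2\Delta$, and no choice of $\eta$ removes the additive $2\Delta$: with $\eta=\Theta(1/\sqrt{T})$ you prove $O(1/\sqrt{T})+O(\Delta)$, a non-vanishing error floor, not the claimed $O(\sqrt{\Delta/T})$. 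This is the inevitable price of handling the cross term $-\eta\langle\tilde g_t-\nabla f(\theta_t),\nabla f(\theta_t)\rangle$ by Young's inequality, which charges the bias at $O(\eta)\lVert\tilde g_t-\nabla f(\theta_t)\rVert^2$ per step. The paper instead expands $\lVert\tilde g_t\rVert^2=\lVert\tilde g_t-\nabla f(\theta_t)\rVert^2+\lVert\nabla f(\theta_t)\rVert^2+2\langle\tilde g_t-\nabla f(\theta_t),\nabla f(\theta_t)\rangle$ inside the quadratic term, ends up with the error carrying coefficient $L\eta^2-\tfrac{\eta}{2}\le L\eta^2$, and then sets $\eta=\min\bigl\{\tfrac{3}{2L},\sqrt{(f(\theta_0)-f(\theta^\ast))/(LTV)}\bigr\}$ so that the error contribution is $O(\eta V)=O(\sqrt{V/T})$; that $O(\eta^2)$ (rather than $O(\eta)$) weight on the error is exactly what makes the theorem's rate possible. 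Be aware, though, that the paper's absorption of the cross term is itself delicate: as written it applies $\langle a,b\rangle\le\tfrac12\lVert a\rVert^2+\tfrac12\lVert b\rVert^2$ with the signed coefficient $L\eta^2-\eta$, which is negative for small $\eta$; a sign-correct treatment lands back at your $\pm\eta/2$ coefficients. So your difficulty is not incidental — it sits precisely at the step on which the $O(\sqrt{\Delta/T})$ claim hinges, and any complete proof must justify why the bias term can be made to enter with an $O(\eta^2)$ weight.
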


\mysubsubsection{Remark}
Note that the convergence analysis by~\citet{fedbcd} relies on 
a strong assumption on the approximated gradients\footnote{\citet{fedbcd} 
assume the approximated gradients are unbiased 
to analyze the convergence of FedBCD, 
whilst our analysis does not need such a strong assumption.}, 
which fails to reveal the negative impacts brought by local updates. 
In contrast, our analysis relies on more relaxed assumptions and 
uncovers the variance problem and approximation errors in local updates. 
This also sheds light on 
how the local sampling and 
instance weighting influence the convergence. 
On the one hand, with a larger $W$, 
the stochastic variance becomes smaller (i.e., smaller $\Delta$) 
and thus benefits the convergence. 
On the other hand, if the stale information is more reliable, 
which gives a higher $\rho$, 
the convergence would be faster as well. 
Nevertheless, it is worthy to notice that 
there exists a tradeoff since 
$\rho$ decreases w.r.t. $W$ and $R$ 
due to the larger staleness. 

\mysubsubsection{Communication Complexity}
Compared with the sub-linear rate $O(1/\sqrt{T})$ of 
vanilla SGD training, 
our framework introduces an extra factor $O(\sqrt{\Delta})$. 
In other words, vanilla SGD training needs $O(T)$ steps 
whilst we need $O(\Delta \times T)$ steps 
to achieve similar convergence. 
However, the communication complexity of our framework 
can be reduced to $O(\Delta/R)$ times of 
that of vanilla SGD training. 
In Section~\ref{sec:expr}, we will empirically show that 
\system requires much fewer communication rounds 
to achieve the same model quality.

\subsection{Security Analysis}
\label{sec:security_analysis}

Obviously, our framework shares the same security guarantees as 
vanilla VFL training as introduced in Section~\ref{sec:bg}, 
since we do not need to exchange values other than 
activations and derivatives. 
We refer interested readers to the previous studies 
for the detailed analysis~\cite{async_vfl_1,async_vfl_2,fedbcd,vafl}.
Moreover, because our framework can effectively 
reduce the number of communication rounds 
as shown in our experiments, 
the number of messages transmitted between parties 
would become smaller, which conveys 
stronger privacy preservation.

\section{Implementation and Evaluation}
\label{sec:expr}

\subsection{Setup}
\mysubsubsection{Implementation}
We implement \system on top of TensorFlow~\cite{tensorflow} 
and gRPC. 
We first develop lightweight message queues to 
support the \texttt{send} and \texttt{recv} primitives 
for cross-party communication. 
Then, an \texttt{FLGraph} abstraction is proposed to 
automatically generate the backward computation graph and the local update operation 
after the users have defined the bottom and top models. 
\system has been deployed to 
the collaborative recommendation tasks 
of Tencent Inc. 
To be specific, 
Tencent works as \host and provides user features, 
and \guest is an e-commercial company that 
owns item features and labels 
(such as clicks or conversions), 
but lacks a rich set of user features. 
In the preprocessing phase, the private data of both companies 
are aligned as user-item pairs. 
Then, \system trains VFL models 
to fit the labels by the features of both parties. 

\mysubsubsection{Experimental Setup}
All experiments are carried out on two servers 
where each server acts as one party. 
For each party, we run the training tasks with 
one NVIDIA V100 GPU. 
The WAN bandwidth between the two servers is 300Mbps. 

\textbf{Models.} 
We focus on the deep learning recommendation models (DLRMs) 
since it is one of the most common neural network architectures 
in real-world VFL collaborations. 
To be specific, we consider two widely-used DLRMs, 
which are Wide and Deep (WDL) and 
Deep Structured Semantic Model (DSSM).

\begin{table}[!t]
\small
\centering
\caption{\small{Description of datasets for end-to-end evaluation.}}
\begin{tabular}{|c|c|c|c|}
\hline
\textbf{Dataset} & \textbf{\#Instances (train/test)}
& \textbf{\#Fields (A/B)} \\
\hline
\hline
\textit{Criteo} & 41M/4.5M & 26/13 \\
\hline
\textit{Avazu} & 36M/4.0M & 14/8 \\
\hline
\textit{D3} & 32M/3.5M & 25/18 \\
\hline
\end{tabular}
\label{tb:dataset}
\end{table}

\textbf{Datasets.} 
Three datasets are used in our experiments, 
as listed in Table~\ref{tb:dataset}. 
\textit{Criteo}\footnote{\url{http://labs.criteo.com/2014/02/kaggle-display-advertising-challenge-dataset/}} 
and \textit{Avazu}\footnote{\url{https://www.kaggle.com/c/avazu-ctr-prediction/data}} 
are two large-scale public datasets. 
Whilst \textit{D3} is a productive dataset 
in Tencent Inc. 
As our major focus is the training phase, 
we assume all datasets have been aligned. 

\textbf{Protocols.} 
We train the models using the AdaGrad optimizer~\cite{adagrad} 
with a batch size of 4096. 
The output dimensionality of $\za$ 
is set as 256. 
To achieve a fair comparison, 
we tune the learning rate for 
vanilla VFL training and apply the same  
to our work directly. 
For each experiment, we run three trials 
and report the mean and standard deviation (stddev).

\begin{figure*}
\centering
\subfigure[Impact of local update.]{
\label{fig:sensitivity_update}
\includegraphics[width=.24\textwidth]{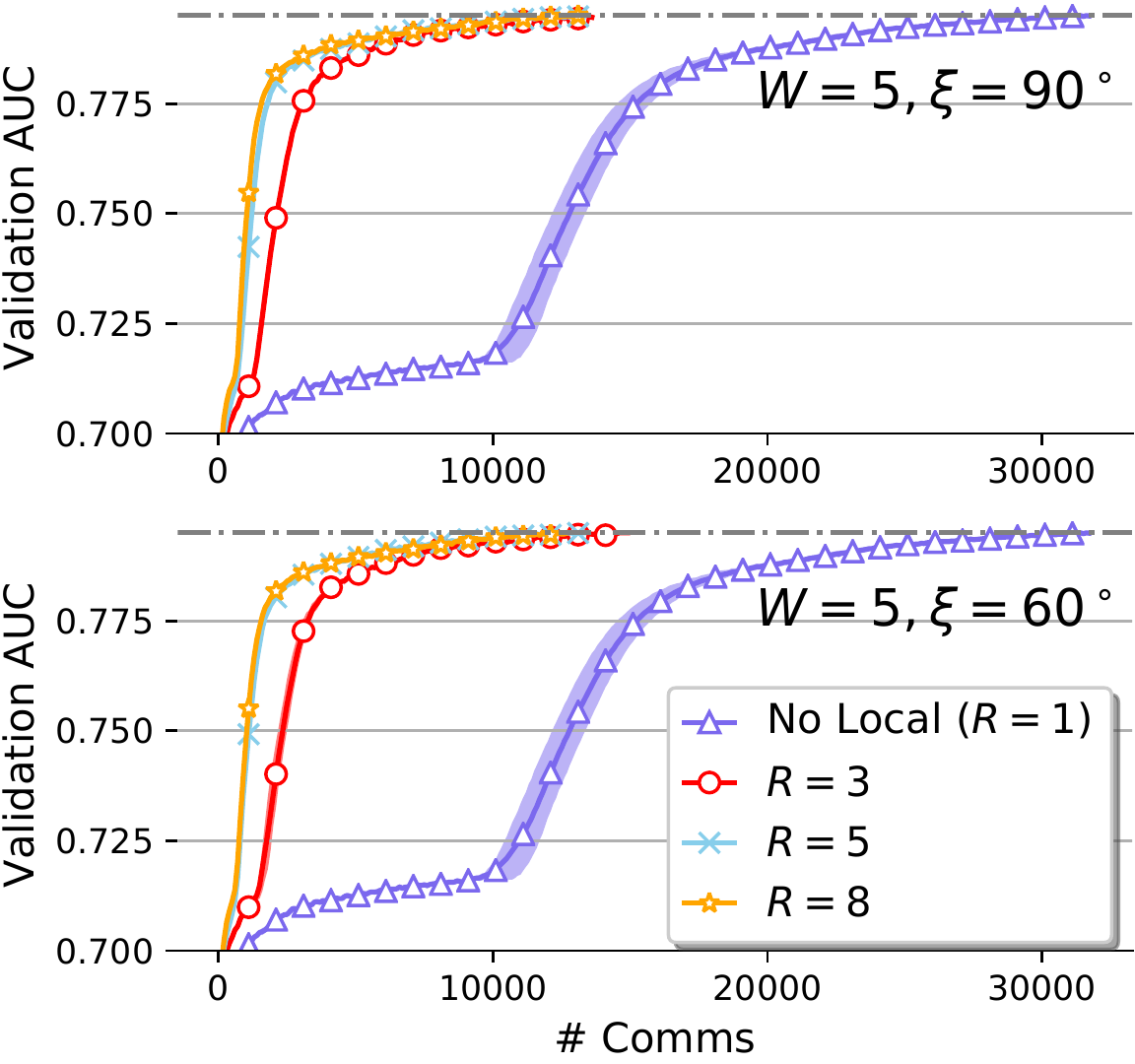}}
\subfigure[Impact of local sampling.]{
\label{fig:sensitivity_workset}
\includegraphics[width=.24\textwidth]{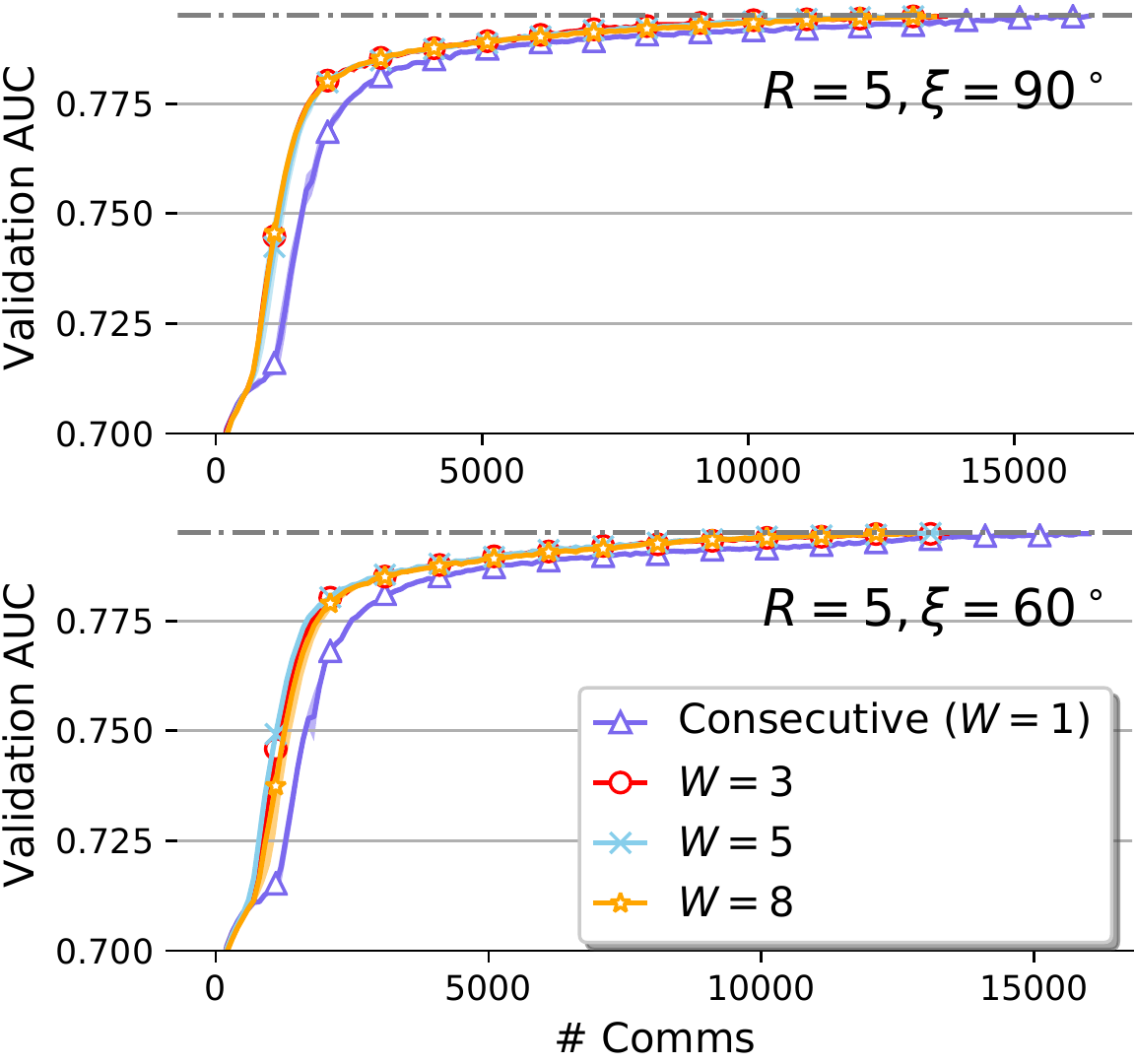}}
\subfigure[Impact of instance weighting.]{
\label{fig:sensitivity_weight}
\includegraphics[width=.24\textwidth]{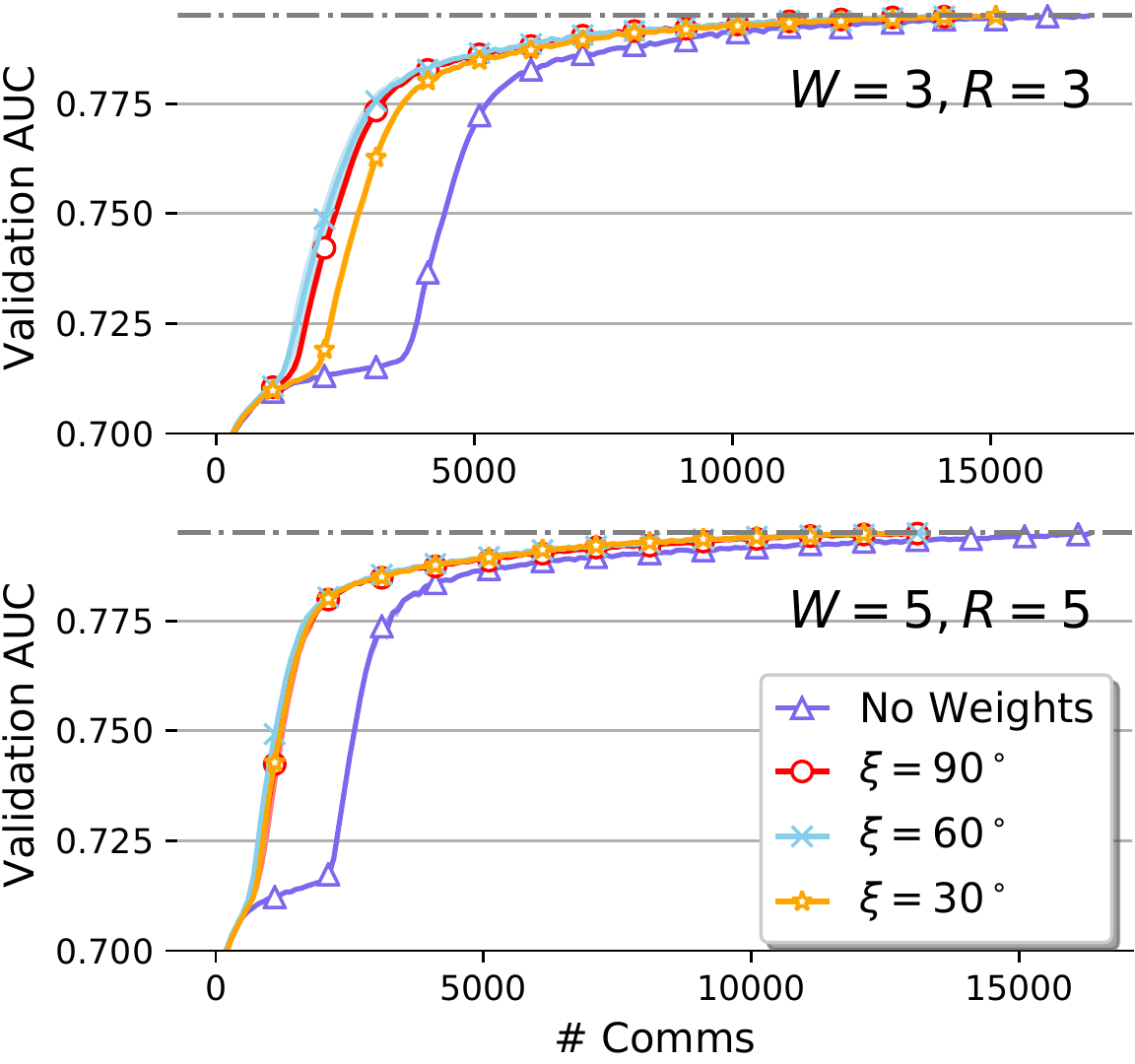}}
\subfigure[Quantiles of cosine similarities.]{
\label{fig:cosine_similarity}
\includegraphics[width=.24\textwidth]{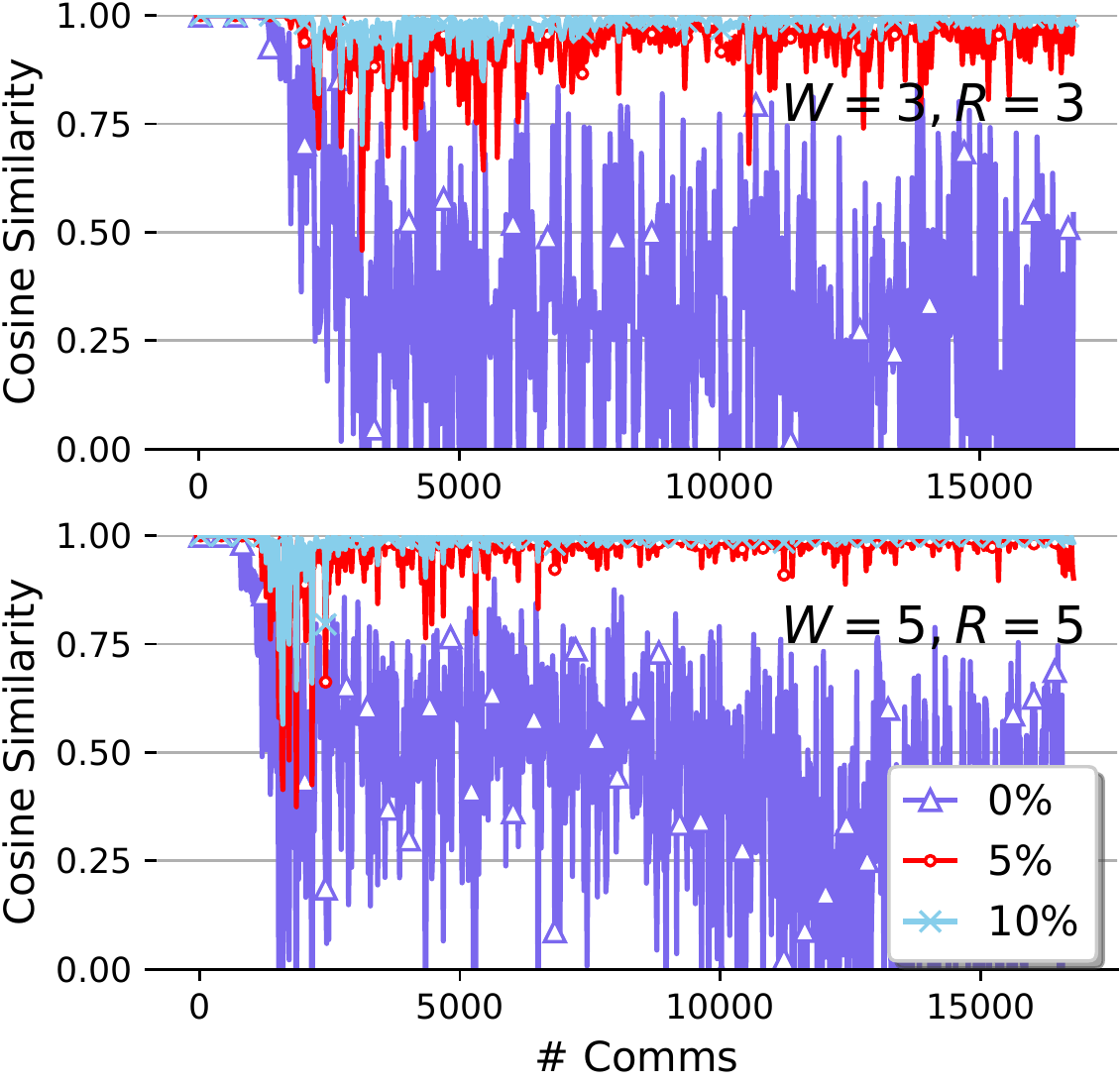}}
\caption{\small{Validation AUC metrics in terms of 
the number of communication rounds. 
For each technique, we conduct experiments with and without 
the technique and vary the corresponding hyper-parameter 
under two different settings.}}
\label{fig:sensitivity}
\end{figure*}

\begin{table*}[!t]
\centering
\small
\caption{\small{Number of communication rounds
(mean and stddev) 
required to reach the same model performance 
in Figure~\ref{fig:sensitivity}.}}
\begin{tabular}{|c|c|c|c|c|}
\hline
\textbf{Local Update}
& {No Local $(R=1)$} 
& {$R=3$} 
& {$R=5$} 
& {$R=8$} 
\\
\hline
$W=5,\xi=90^\circ$ 
& $31540 \pm 631.19$ 
& $14000 \pm 424.26 (\downarrow 55.61\%)$
& $12767 \pm 654.90 (\downarrow 59.52\%)$
& $12733 \pm 410.96 (\downarrow 59.63\%)$ 
\\
\hline
$W=5,\xi=60^\circ$ 
& $31540 \pm 631.19$ 
& $14333 \pm 329.98 (\downarrow 54.56\%)$
& $12767 \pm 286.74 (\downarrow 59.52\%)$
& $12500 \pm 424.26 (\downarrow 60.37\%)$ 
\\
\hline
\hline
\textbf{Local Sampling}
& {Consecutive $(W=1)$} 
& {$W=3$} 
& {$W=5$} 
& {$W=8$} 
\\
\hline
$R=5,\xi=90^\circ$ 
& $16400 \pm 244.95$ 
& $13333 \pm 339.94 (\downarrow 18.70\%)$ 
& $12767 \pm 654.90 (\downarrow 22.15\%)$ 
& $13333 \pm 249.44 (\downarrow 18.70\%)$ 
\\
\hline
$R=5,\xi=60^\circ$ 
& $15967 \pm 339.94$ 
& $12700 \pm 141.42 (\downarrow 20.46\%)$ 
& $12767 \pm 286.74 (\downarrow 20.04\%)$ 
& $12667 \pm 419.00 (\downarrow 20.67\%)$ 
\\
\hline
\hline
\textbf{Instance Weighting}
& {No Weights} 
& {$\xi=90^\circ$} 
& {$\xi=60^\circ$} 
& {$\xi=30^\circ$} 
\\
\hline
$W=3,R=3$ 
& $16767 \pm 492.16$ 
& $14233 \pm 235.70 (\downarrow 15.11\%)$
& $15200 \pm 163.30 (\downarrow 9.36\%)$
& $14667 \pm 94.28 (\downarrow 12.52\%)$ 
\\
\hline
$W=5,R=5$ 
& $16467 \pm 492.16$ 
& $12767 \pm 286.74 (\downarrow 22.47\%)$
& $12567 \pm 205.48 (\downarrow 23.68\%)$
& $12767 \pm 654.90 (\downarrow 22.47\%)$ 
\\
\hline
\end{tabular}
\label{tb:sensitivity}
\end{table*}

\subsection{Ablation Study and Sensitivity}
\label{sec:expr_sensitivity}
Compared with the vanilla VFL training, 
\system contains three additional techniques, 
i.e., the local update technique, 
the uniform local sampling strategy, 
and the instance weighting mechanism 
(parameterized by $R, W, \xi$, respectively). 
We first investigate the impact of each technique 
and assess the sensitivity w.r.t. the hyper-parameters. 
To be specific, we train WDL on the $Criteo$ dataset 
and measure the number of communication rounds taken 
to the same target AUC metric. 
The convergence curves are shown 
in Figure~\ref{fig:sensitivity} 
and the required communication rounds 
are listed in Table~\ref{tb:sensitivity}.

\mysubsubsection{Impact of Local Update} 
We first assess the effect of the local update technique 
by varying $R$. 
The results are shown 
in Figure~\ref{fig:sensitivity_update}. 
In general, compared with the vanilla VFL training, 
applying the local update technique can significantly 
reduce the number of communication rounds 
required to reach the same model performance. 
For instance, when $R=3$, the number of 
communication rounds is reduced by around 55$\%$. 
When we increase $R$ to 5, 
the improvement increases to around 60$\%$ accordingly. 
Letting $R=8$ provides a faster convergence speed than 
that of $R=5$ in the early phase. 
However, due to the larger staleness, 
it takes a similar number of communication rounds 
to attain the same validation AUC. 

\mysubsubsection{Impact of Local Sampling}
To assess the impact of the local sampling strategy, 
we conduct experiments with various $W$. 
When $W=1$, each mini-batch will be utilized 
for multiple consecutive local steps, 
whilst we apply the round-robin local sampling strategy 
when $W>1$ to achieve uniform sampling. 
As shown in Figure~\ref{fig:sensitivity_workset}, 
the round-robin local sampling strategy 
consistently outperforms 
the consecutive counterpart --- 
the numbers of required communication rounds 
are reduced by 18-22$\%$. 
This matches our analysis that 
uniform sampling can help alleviate 
the stochastic variance in mini-batch gradients 
and therefore improve the convergence. 
Moreover, it is worthy to note that the convergence speed 
does not change significantly when $W \in \{3,5,8\}$, 
which verifies the robustness of \system 
against the choice of workset size.

\begin{figure*}
\centering
\includegraphics[width=\linewidth]{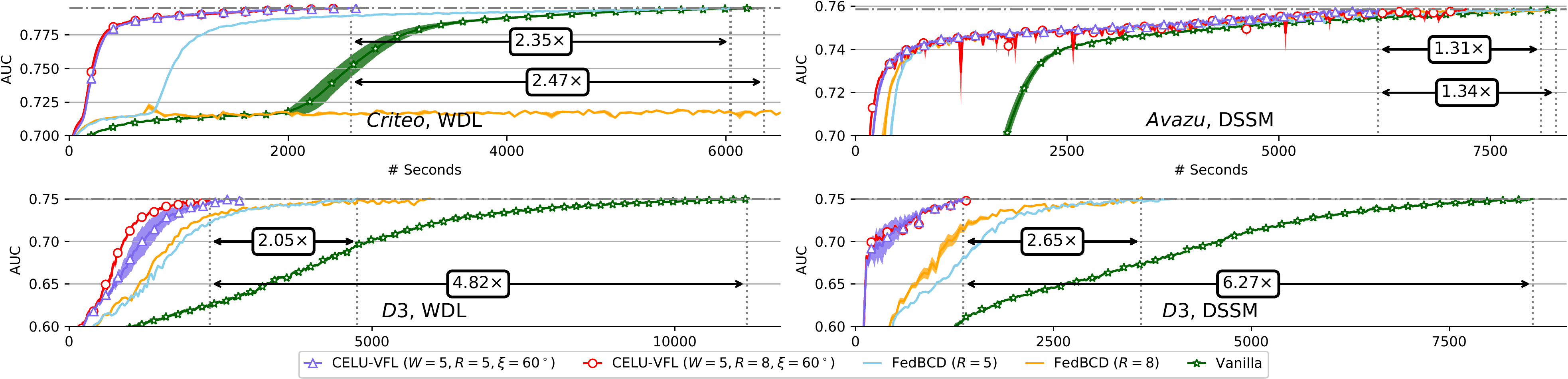}
\caption{\small{Validation AUC metrics in terms of 
running time (in seconds)}}
\label{fig:e2e}
\end{figure*}

\mysubsubsection{Impact of Instance Weighting}
Next, we explore the benefit brought by 
the instance weighting mechanism 
with different $\xi$. 
The results are presented 
in Figure~\ref{fig:sensitivity_weight}. 
In short, the instance weighting mechanism 
accelerates the convergence under both settings 
since it restricts the approximation errors 
incurred by the stale values. 
Under the first setting ($W=3,R=3$), 
we can save 9-15$\%$ of the communication rounds. 
Under the second setting ($W=5,R=5$), 
the improvement bumps to around $23\%$. 
This is a reasonable phenomenon --- 
with larger $W$ and/or $R$, 
the staleness of cached values increases and 
the approximation errors are enlarged as well, 
so the contribution of our weighting mechanism 
is more significant. 

We further plot the cosine similarities 
for better interpretability. 
To be specific, for each local update, 
we compute the quantiles of all similarities 
in the current batch 
(0\% quantiles correspond to the lowest similarities). 
The results in Figure{~\ref{fig:cosine_similarity}} show two facts: 
(1) Most of the stale statistics are reliable. 
For instance, over 90\% of the cosine similarities 
are greater than 0.5 even in the fast converging period, 
which supports our motivation that 
the model does not change significantly 
and the stale statistics can be re-used in local updates. 
(2) Our instance weighting mechanism is able to 
filter or penalize the outliers 
to accelerate the convergence.

\mysubsubsection{Summary}
The experiments demonstrate the power of 
each technique individually. 
Overall, all techniques have significant contributions 
to reducing the number of communication rounds 
compared with the counterparts without the techniques. 
Furthermore, our work is robust to different configurations 
(the choices of the hyper-parameters), 
which is vital to the applicability in practice.

\subsection{End-to-End Evaluation}
\label{sec:expr_e2e}
Then, we evaluate the end-to-end training performance 
of \system. 
To be specific, we compare \system with 
two competitors, which are FedBCD~\cite{fedbcd} 
and the vanilla VFL training 
(denoted as Vanilla). 
Following the results in Section~\ref{sec:expr_sensitivity}, 
we fix $W=5,\xi=60^\circ$ and let $R \in \{5,8\}$ 
for all experiments. 
We presents the convergence curves in Figure~\ref{fig:e2e} 
and discuss the results for different datasets below. 

\mysubsubsection{Results on Criteo}
We first train WDL on the \textit{Criteo} dataset. 
Thanks to the reduction in communication, 
\system achieves 2.47$\times$ of speedup compared with Vanilla. 
Whilst the performance of FedBCD is unsatisfactory. 
For $R=5$, although FedBCD runs faster than Vanilla 
in the short-term convergence, 
it slows down in the long-term convergence. 
Eventually, the gap between FedBCD and Vanilla is very small. 
The reason is that 
when the model approaches the (local) optimum point, 
the gradients become smaller in scale and therefore 
are more sensitive to 
the errors caused by local sampling and stale information. 
Worst still, FedBCD fails to converge for $R=8$ 
due to the large staleness. 
In contrast, \system has a better statistical efficiency 
owing to our techniques. 
As a result, \system can support a larger $R$ 
and outperform FedBCD by 2.35$\times$. 

\mysubsubsection{Results on Avazu}
We then evaluate the performance by training DSSM on 
the \textit{Avazu} dataset. 
First, in the early phase, 
both \system and FedBCD accelerate the convergence 
by a large extent compared with Vanilla 
with the help of local updates. 
However, similar to the results on \textit{Criteo}, 
FedBCD suffers from a slow convergence later, 
whilst \system converges properly 
and achieves 1.31-1.34$\times$ of improvement 
compared with the other competitors.  

\mysubsubsection{Results on Industrial Dataset}
Finally, we conduct experiments on 
the industrial dataset \textit{D3} 
to see how \system performs in real-world workloads. 
Overall, \system outperforms Vanilla by 4.82-6.27$\times$ 
on the two training tasks. 
FedBCD has a better performance on the industrial dataset, 
however, it is still 2.05-2.65$\times$ slower than \system. 
These empirical results demonstrate 
the superiority of our work 
in practical VFL applications.

\section{Related Works}
\label{sec:related}

\mysubsubsection{Federated learning}
In recent years, FL has aroused the interests 
from both academia and industry. 
Notably, VFL conveys a possibility to unite the data of different corporations 
and jointly build the models, 
which fits numerous real-world cross-enterprise collaborations
~\cite{secureboost,vf2boost,vfl_lr,vfl_label_protect,fdml}. 
Besides VFL, another important field of FL 
is the horizontal FL (HFL), 
where the datasets are horizontally partitioned
~\cite{fl_concepts,konevcny2016federated_opt,konevcny2016federated_learn,fedprox}. 
These works are orthogonal to ours since we focus on the VFL scenario.

\mysubsubsection{Asynchronous VFL}
In addition to the local update technique, 
various works address the cross-party communication cost 
in VFL via asynchronous learning. 
However, these works cannot be applied in our scenario. 
Below we categorize them into two lines. 

{(Asynchronous Workers)} 
The first kind of works tries to 
overlap computation and communication 
by using multiple training workers (threads) 
for each party~\cite{async_vfl_1,async_vfl_2}. 
The models are stored in a shared-memory 
and each worker individually reads and updates the models 
in a Hogwild! manner~\cite{hogwild}. 
By doing so, when some workers are waiting for 
the message transmission, 
the other workers can concurrently do the computation. 

Nevertheless, these methods mainly focus on linear models 
where the output dimensionality of the exchanged statistics is very low 
(e.g., only one for logistic regression and support vector machine). 
Due to the small message size, 
sending one message cannot make full use of the bandwidth, 
so their major bottleneck is actually 
the high network latency rather than the low network bandwidth. 
In contrast, the output dimensionality is much higher for many 
modern deep learning models, 
so the messages are much larger in size and 
the network bandwidth would be easily occupied 
when sending one message. 
Thus, letting multiple workers communicate concurrently 
would cause severe network congestion and even lead to failure. 
Consequently, these methods do not fit our scenario. 

{(Asynchronous Parties)} 
The second category assumes that there are two or more \host's 
and allows different parties to train 
asynchronously~\cite{vafl,fdml}. 
To be specific, \guest is treated as a server that holds the labels and 
stores the forward activations received from all \host's. 
During the training process, 
each \hosti individually samples a mini-batch 
and sends {\small$\zai$} to \guest. 
Upon receiving, \guest updates the local storage,  
computes and returns {\small$\nabla\zai$}. 
Finally, each \hosti individually performs a backward propagation 
to update {\small$\wbai$} without synchronizing with other parties. 
The spirit of this line of research lies with 
the asynchrony between \host's --- 
different \host's can train on different mini-batches 
to achieve asynchronous learning.

However, such methods are orthogonal to our work since 
we focus on the case where only one \host is involved, 
which is more commonly seen in real-world applications. 
Moreover, since these works still require to exchange 
forward activations and backward derivatives in every step, 
our work can be integrated with their methods easily 
by applying the local update technique 
for each \hosti individually. 
We would like to leave the extension to multi-party VFL training 
as our future work.

\section{Conclusion}
\label{sec:conc}

This paper focuses on the deficiency caused by heavy cross-party communication 
in VFL training. 
Specifically, we propose \system, an efficient VFL training framework 
with cached-enable local updates. 
To improve the statistical efficiency, we develop two novel techniques 
to achieve fairness in stale statistics sampling and 
measure the fine-grained reliability of the local updates, respectively. 
Finally, both theoretical analysis and empirical experiments verify 
the effectiveness of \system.

\begin{acks}
This work is supported by National Natural Science Foundation of China (NSFC) (No. 61832001), PKU-Tencent joint research Lab, and Beijing Academy of Artificial Intelligence (BAAI). Bin Cui is the corresponding author. 
\end{acks}

\balance

\nocite{downpour_sandblaster,petuum,sketchml,skcompress,hogwild,easgd,het,het_gmp,snapshot_boosting,real_time_rec,cl4rec,gnn_rs,dp_survey,dp_dl,dp_norm_lap,robust_fl_edge,rookase,sadde}
\bibliographystyle{ACM-Reference-Format}
\bibliography{reference}


\begin{thebibliography}{53}


\ifx \showCODEN    \undefined \def \showCODEN     #1{\unskip}     \fi
\ifx \showDOI      \undefined \def \showDOI       #1{#1}\fi
\ifx \showISBNx    \undefined \def \showISBNx     #1{\unskip}     \fi
\ifx \showISBNxiii \undefined \def \showISBNxiii  #1{\unskip}     \fi
\ifx \showISSN     \undefined \def \showISSN      #1{\unskip}     \fi
\ifx \showLCCN     \undefined \def \showLCCN      #1{\unskip}     \fi
\ifx \shownote     \undefined \def \shownote      #1{#1}          \fi
\ifx \showarticletitle \undefined \def \showarticletitle #1{#1}   \fi
\ifx \showURL      \undefined \def \showURL       {\relax}        \fi
\providecommand\bibfield[2]{#2}
\providecommand\bibinfo[2]{#2}
\providecommand\natexlab[1]{#1}
\providecommand\showeprint[2][]{arXiv:#2}

\bibitem[\protect\citeauthoryear{??}{ccp}{2018}]%
        {ccpa}
 \bibinfo{year}{2018}\natexlab{}.
\newblock \bibinfo{title}{California Consumer Privacy Act (CCPA)}.
\newblock \bibinfo{howpublished}{\url{https://oag.ca.gov/privacy/ccpa}}.
\newblock


\bibitem[\protect\citeauthoryear{??}{cdp}{2021}]%
        {cdpa}
 \bibinfo{year}{2021}\natexlab{}.
\newblock \bibinfo{title}{Virginia Consumer Data Protection Act (CDPA)}.
\newblock
  \bibinfo{howpublished}{\url{https://lis.virginia.gov/cgi-bin/legp604.exe?212+sum+HB2307}}.
\newblock


\bibitem[\protect\citeauthoryear{Abadi, Barham, Chen, Chen, Davis, Dean, Devin,
  Ghemawat, Irving, Isard, Kudlur, Levenberg, Monga, Moore, Murray, Steiner,
  Tucker, Vasudevan, Warden, Wicke, Yu, and Zheng}{Abadi
  et~al\mbox{.}}{2016a}]%
        {tensorflow}
\bibfield{author}{\bibinfo{person}{Mart{\'{\i}}n Abadi}, \bibinfo{person}{Paul
  Barham}, \bibinfo{person}{Jianmin Chen}, \bibinfo{person}{Zhifeng Chen},
  \bibinfo{person}{Andy Davis}, \bibinfo{person}{Jeffrey Dean},
  \bibinfo{person}{Matthieu Devin}, \bibinfo{person}{Sanjay Ghemawat},
  \bibinfo{person}{Geoffrey Irving}, \bibinfo{person}{Michael Isard},
  \bibinfo{person}{Manjunath Kudlur}, \bibinfo{person}{Josh Levenberg},
  \bibinfo{person}{Rajat Monga}, \bibinfo{person}{Sherry Moore},
  \bibinfo{person}{Derek~Gordon Murray}, \bibinfo{person}{Benoit Steiner},
  \bibinfo{person}{Paul~A. Tucker}, \bibinfo{person}{Vijay Vasudevan},
  \bibinfo{person}{Pete Warden}, \bibinfo{person}{Martin Wicke},
  \bibinfo{person}{Yuan Yu}, {and} \bibinfo{person}{Xiaoqiang Zheng}.}
  \bibinfo{year}{2016}\natexlab{a}.
\newblock \showarticletitle{TensorFlow: {A} System for Large-Scale Machine
  Learning}. In \bibinfo{booktitle}{\emph{12th {USENIX} Symposium on Operating
  Systems Design and Implementation, {OSDI} 2016}}.
  \bibinfo{publisher}{{USENIX} Association}, \bibinfo{pages}{265--283}.
\newblock


\bibitem[\protect\citeauthoryear{Abadi, Chu, Goodfellow, McMahan, Mironov,
  Talwar, and Zhang}{Abadi et~al\mbox{.}}{2016b}]%
        {dp_dl}
\bibfield{author}{\bibinfo{person}{Mart{\'{\i}}n Abadi}, \bibinfo{person}{Andy
  Chu}, \bibinfo{person}{Ian~J. Goodfellow}, \bibinfo{person}{H.~Brendan
  McMahan}, \bibinfo{person}{Ilya Mironov}, \bibinfo{person}{Kunal Talwar},
  {and} \bibinfo{person}{Li Zhang}.} \bibinfo{year}{2016}\natexlab{b}.
\newblock \showarticletitle{Deep Learning with Differential Privacy}. In
  \bibinfo{booktitle}{\emph{Proceedings of the 2016 {ACM} {SIGSAC} Conference
  on Computer and Communications Security, {CCS} 2016}}.
  \bibinfo{publisher}{{ACM}}, \bibinfo{pages}{308--318}.
\newblock


\bibitem[\protect\citeauthoryear{Allen{-}Zhu}{Allen{-}Zhu}{2017}]%
        {pmlr-v70-allen-zhu17a}
\bibfield{author}{\bibinfo{person}{Zeyuan Allen{-}Zhu}.}
  \bibinfo{year}{2017}\natexlab{}.
\newblock \showarticletitle{Natasha: Faster Non-Convex Stochastic Optimization
  via Strongly Non-Convex Parameter}. In \bibinfo{booktitle}{\emph{Proceedings
  of the 34th International Conference on Machine Learning, {ICML} 2017}}
  \emph{(\bibinfo{series}{Proceedings of Machine Learning Research})},
  Vol.~\bibinfo{volume}{70}. \bibinfo{publisher}{{PMLR}},
  \bibinfo{pages}{89--97}.
\newblock


\bibitem[\protect\citeauthoryear{Chen, Laine, and Rindal}{Chen
  et~al\mbox{.}}{2017}]%
        {psi3}
\bibfield{author}{\bibinfo{person}{Hao Chen}, \bibinfo{person}{Kim Laine},
  {and} \bibinfo{person}{Peter Rindal}.} \bibinfo{year}{2017}\natexlab{}.
\newblock \showarticletitle{Fast Private Set Intersection from Homomorphic
  Encryption}. In \bibinfo{booktitle}{\emph{Proceedings of the 2017 {ACM}
  {SIGSAC} Conference on Computer and Communications Security, {CCS} 2017}}.
  \bibinfo{publisher}{{ACM}}, \bibinfo{pages}{1243--1255}.
\newblock


\bibitem[\protect\citeauthoryear{Chen, Jin, Sun, and Yin}{Chen
  et~al\mbox{.}}{2020}]%
        {vafl}
\bibfield{author}{\bibinfo{person}{Tianyi Chen}, \bibinfo{person}{Xiao Jin},
  \bibinfo{person}{Yuejiao Sun}, {and} \bibinfo{person}{Wotao Yin}.}
  \bibinfo{year}{2020}\natexlab{}.
\newblock \showarticletitle{{VAFL:} a Method of Vertical Asynchronous Federated
  Learning}.
\newblock \bibinfo{journal}{\emph{CoRR}}  \bibinfo{volume}{abs/2007.06081}
  (\bibinfo{year}{2020}).
\newblock


\bibitem[\protect\citeauthoryear{Cheng, Fan, Jin, Liu, Chen, Papadopoulos, and
  Yang}{Cheng et~al\mbox{.}}{2021}]%
        {secureboost}
\bibfield{author}{\bibinfo{person}{Kewei Cheng}, \bibinfo{person}{Tao Fan},
  \bibinfo{person}{Yilun Jin}, \bibinfo{person}{Yang Liu},
  \bibinfo{person}{Tianjian Chen}, \bibinfo{person}{Dimitrios Papadopoulos},
  {and} \bibinfo{person}{Qiang Yang}.} \bibinfo{year}{2021}\natexlab{}.
\newblock \showarticletitle{SecureBoost: {A} Lossless Federated Learning
  Framework}.
\newblock \bibinfo{journal}{\emph{{IEEE} Intell. Syst.}} \bibinfo{volume}{36},
  \bibinfo{number}{6} (\bibinfo{year}{2021}), \bibinfo{pages}{87--98}.
\newblock


\bibitem[\protect\citeauthoryear{Croft, Sack, and Shi}{Croft
  et~al\mbox{.}}{2022}]%
        {dp_norm_lap}
\bibfield{author}{\bibinfo{person}{William~L. Croft},
  \bibinfo{person}{J{\"{o}}rg{-}R{\"{u}}diger Sack}, {and} \bibinfo{person}{Wei
  Shi}.} \bibinfo{year}{2022}\natexlab{}.
\newblock \showarticletitle{Differential Privacy via a Truncated and Normalized
  Laplace Mechanism}.
\newblock \bibinfo{journal}{\emph{J. Comput. Sci. Technol.}}
  \bibinfo{volume}{37}, \bibinfo{number}{2} (\bibinfo{year}{2022}),
  \bibinfo{pages}{369--388}.
\newblock


\bibitem[\protect\citeauthoryear{Dai, Wei, Zheng, Kim, Lee, Yin, Ho, and
  Xing}{Dai et~al\mbox{.}}{2013}]%
        {petuum}
\bibfield{author}{\bibinfo{person}{Wei Dai}, \bibinfo{person}{Jinliang Wei},
  \bibinfo{person}{Xun Zheng}, \bibinfo{person}{Jin~Kyu Kim},
  \bibinfo{person}{Seunghak Lee}, \bibinfo{person}{Junming Yin},
  \bibinfo{person}{Qirong Ho}, {and} \bibinfo{person}{Eric~P. Xing}.}
  \bibinfo{year}{2013}\natexlab{}.
\newblock \showarticletitle{Petuum: {A} Framework for Iterative-Convergent
  Distributed {ML}}.
\newblock \bibinfo{journal}{\emph{CoRR}}  \bibinfo{volume}{abs/1312.7651}
  (\bibinfo{year}{2013}).
\newblock


\bibitem[\protect\citeauthoryear{Dean, Corrado, Monga, Chen, Devin, Le, Mao,
  Ranzato, Senior, Tucker, Yang, and Ng}{Dean et~al\mbox{.}}{2012}]%
        {downpour_sandblaster}
\bibfield{author}{\bibinfo{person}{Jeffrey Dean}, \bibinfo{person}{Greg
  Corrado}, \bibinfo{person}{Rajat Monga}, \bibinfo{person}{Kai Chen},
  \bibinfo{person}{Matthieu Devin}, \bibinfo{person}{Quoc~V. Le},
  \bibinfo{person}{Mark~Z. Mao}, \bibinfo{person}{Marc'Aurelio Ranzato},
  \bibinfo{person}{Andrew~W. Senior}, \bibinfo{person}{Paul~A. Tucker},
  \bibinfo{person}{Ke Yang}, {and} \bibinfo{person}{Andrew~Y. Ng}.}
  \bibinfo{year}{2012}\natexlab{}.
\newblock \showarticletitle{Large Scale Distributed Deep Networks}. In
  \bibinfo{booktitle}{\emph{Annual Conference on Neural Information Processing
  Systems 2012, NeurIPS 2012}}. \bibinfo{pages}{1232--1240}.
\newblock


\bibitem[\protect\citeauthoryear{Dong, Chen, and Wen}{Dong
  et~al\mbox{.}}{2013}]%
        {psi2}
\bibfield{author}{\bibinfo{person}{Changyu Dong}, \bibinfo{person}{Liqun Chen},
  {and} \bibinfo{person}{Zikai Wen}.} \bibinfo{year}{2013}\natexlab{}.
\newblock \showarticletitle{When private set intersection meets big data: an
  efficient and scalable protocol}. In \bibinfo{booktitle}{\emph{Proceedings of
  the 2013 {ACM} {SIGSAC} Conference on Computer and Communications Security,
  {CCS} 2013}}. \bibinfo{publisher}{{ACM}}, \bibinfo{pages}{789--800}.
\newblock


\bibitem[\protect\citeauthoryear{Duchi, Hazan, and Singer}{Duchi
  et~al\mbox{.}}{2011}]%
        {adagrad}
\bibfield{author}{\bibinfo{person}{John~C. Duchi}, \bibinfo{person}{Elad
  Hazan}, {and} \bibinfo{person}{Yoram Singer}.}
  \bibinfo{year}{2011}\natexlab{}.
\newblock \showarticletitle{Adaptive Subgradient Methods for Online Learning
  and Stochastic Optimization}.
\newblock \bibinfo{journal}{\emph{J. Mach. Learn. Res.}}  \bibinfo{volume}{12}
  (\bibinfo{year}{2011}), \bibinfo{pages}{2121--2159}.
\newblock


\bibitem[\protect\citeauthoryear{Dwork}{Dwork}{2008}]%
        {dp_survey}
\bibfield{author}{\bibinfo{person}{Cynthia Dwork}.}
  \bibinfo{year}{2008}\natexlab{}.
\newblock \showarticletitle{Differential Privacy: {A} Survey of Results}. In
  \bibinfo{booktitle}{\emph{Theory and Applications of Models of Computation,
  5th International Conference, {TAMC} 2008, Xi'an, China, April 25-29, 2008.
  Proceedings}}, Vol.~\bibinfo{volume}{4978}. \bibinfo{publisher}{Springer},
  \bibinfo{pages}{1--19}.
\newblock


\bibitem[\protect\citeauthoryear{Feng, Kumar, Recht, and R{\'{e}}}{Feng
  et~al\mbox{.}}{2012}]%
        {shuffle2}
\bibfield{author}{\bibinfo{person}{Xixuan Feng}, \bibinfo{person}{Arun Kumar},
  \bibinfo{person}{Benjamin Recht}, {and} \bibinfo{person}{Christopher
  R{\'{e}}}.} \bibinfo{year}{2012}\natexlab{}.
\newblock \showarticletitle{Towards a unified architecture for in-RDBMS
  analytics}. In \bibinfo{booktitle}{\emph{Proceedings of the 2012
  International Conference on Management of Data, {SIGMOD} 2012}}.
  \bibinfo{publisher}{{ACM}}, \bibinfo{pages}{325--336}.
\newblock


\bibitem[\protect\citeauthoryear{Freedman, Nissim, and Pinkas}{Freedman
  et~al\mbox{.}}{2004}]%
        {psi1}
\bibfield{author}{\bibinfo{person}{Michael~J. Freedman}, \bibinfo{person}{Kobbi
  Nissim}, {and} \bibinfo{person}{Benny Pinkas}.}
  \bibinfo{year}{2004}\natexlab{}.
\newblock \showarticletitle{Efficient Private Matching and Set Intersection}.
  In \bibinfo{booktitle}{\emph{International Conference on the Theory and
  Applications of Cryptographic Techniques, {EUROCRYPT} 2004}},
  Vol.~\bibinfo{volume}{3027}. \bibinfo{publisher}{Springer},
  \bibinfo{pages}{1--19}.
\newblock


\bibitem[\protect\citeauthoryear{Fu, Shao, Yu, Jiang, Xue, Tao, and Cui}{Fu
  et~al\mbox{.}}{2021}]%
        {vf2boost}
\bibfield{author}{\bibinfo{person}{Fangcheng Fu}, \bibinfo{person}{Yingxia
  Shao}, \bibinfo{person}{Lele Yu}, \bibinfo{person}{Jiawei Jiang},
  \bibinfo{person}{Huanran Xue}, \bibinfo{person}{Yangyu Tao}, {and}
  \bibinfo{person}{Bin Cui}.} \bibinfo{year}{2021}\natexlab{}.
\newblock \showarticletitle{VF\({}^{\mbox{2}}\)Boost: Very Fast Vertical
  Federated Gradient Boosting for Cross-Enterprise Learning}. In
  \bibinfo{booktitle}{\emph{Proceedings of the 2021 International Conference on
  Management of Data, {SIGMOD} 2021}}. \bibinfo{publisher}{{ACM}},
  \bibinfo{pages}{563--576}.
\newblock


\bibitem[\protect\citeauthoryear{Fu, Xue, Cheng, Tao, and Cui}{Fu
  et~al\mbox{.}}{2022}]%
        {blindfl}
\bibfield{author}{\bibinfo{person}{Fangcheng Fu}, \bibinfo{person}{Huanran
  Xue}, \bibinfo{person}{Yong Cheng}, \bibinfo{person}{Yangyu Tao}, {and}
  \bibinfo{person}{Bin Cui}.} \bibinfo{year}{2022}\natexlab{}.
\newblock \showarticletitle{BlindFL: Vertical Federated Machine Learning
  without Peeking into Your Data}. In \bibinfo{booktitle}{\emph{Proceedings of
  the 2022 International Conference on Management of Data, {SIGMOD} 2022}}.
  \bibinfo{publisher}{{ACM}}, \bibinfo{pages}{1316--1330}.
\newblock


\bibitem[\protect\citeauthoryear{Gao, Chen, and Shao}{Gao
  et~al\mbox{.}}{2022}]%
        {robust_fl_edge}
\bibfield{author}{\bibinfo{person}{Zhihe Gao}, \bibinfo{person}{Xiaoming Chen},
  {and} \bibinfo{person}{Xiaodan Shao}.} \bibinfo{year}{2022}\natexlab{}.
\newblock \showarticletitle{Robust federated learning for edge-intelligent
  networks}.
\newblock \bibinfo{journal}{\emph{Sci. China Inf. Sci.}} \bibinfo{volume}{65},
  \bibinfo{number}{3} (\bibinfo{year}{2022}).
\newblock


\bibitem[\protect\citeauthoryear{Ge, Cao, Wang, Chen, and Zhang}{Ge
  et~al\mbox{.}}{2021}]%
        {sadde}
\bibfield{author}{\bibinfo{person}{Yong{-}Feng Ge}, \bibinfo{person}{Jinli
  Cao}, \bibinfo{person}{Hua Wang}, \bibinfo{person}{Zhenxiang Chen}, {and}
  \bibinfo{person}{Yanchun Zhang}.} \bibinfo{year}{2021}\natexlab{}.
\newblock \showarticletitle{Set-Based Adaptive Distributed Differential
  Evolution for Anonymity-Driven Database Fragmentation}.
\newblock \bibinfo{journal}{\emph{Data Sci. Eng.}} \bibinfo{volume}{6},
  \bibinfo{number}{4} (\bibinfo{year}{2021}), \bibinfo{pages}{380--391}.
\newblock


\bibitem[\protect\citeauthoryear{Gu, Xu, Huo, Deng, and Huang}{Gu
  et~al\mbox{.}}{2020}]%
        {async_vfl_1}
\bibfield{author}{\bibinfo{person}{Bin Gu}, \bibinfo{person}{An Xu},
  \bibinfo{person}{Zhouyuan Huo}, \bibinfo{person}{Cheng Deng}, {and}
  \bibinfo{person}{Heng Huang}.} \bibinfo{year}{2020}\natexlab{}.
\newblock \showarticletitle{Privacy-Preserving Asynchronous Federated Learning
  Algorithms for Multi-Party Vertically Collaborative Learning}.
\newblock \bibinfo{journal}{\emph{CoRR}}  \bibinfo{volume}{abs/2008.06233}
  (\bibinfo{year}{2020}).
\newblock
\showeprint{2008.06233}


\bibitem[\protect\citeauthoryear{Haddadpour, Kamani, Mahdavi, and
  Cadambe}{Haddadpour et~al\mbox{.}}{2019}]%
        {local_sgd_1}
\bibfield{author}{\bibinfo{person}{Farzin Haddadpour},
  \bibinfo{person}{Mohammad~Mahdi Kamani}, \bibinfo{person}{Mehrdad Mahdavi},
  {and} \bibinfo{person}{Viveck~R. Cadambe}.} \bibinfo{year}{2019}\natexlab{}.
\newblock \showarticletitle{Local {SGD} with Periodic Averaging: Tighter
  Analysis and Adaptive Synchronization}. In \bibinfo{booktitle}{\emph{Annual
  Conference on Neural Information Processing Systems 2019, NeurIPS 2019}}.
  \bibinfo{pages}{11080--11092}.
\newblock


\bibitem[\protect\citeauthoryear{Hu, Niu, Yang, and Zhou}{Hu
  et~al\mbox{.}}{2019}]%
        {fdml}
\bibfield{author}{\bibinfo{person}{Yaochen Hu}, \bibinfo{person}{Di Niu},
  \bibinfo{person}{Jianming Yang}, {and} \bibinfo{person}{Shengping Zhou}.}
  \bibinfo{year}{2019}\natexlab{}.
\newblock \showarticletitle{{FDML:} {A} Collaborative Machine Learning
  Framework for Distributed Features}. In \bibinfo{booktitle}{\emph{Proceedings
  of the 25th {ACM} {SIGKDD} International Conference on Knowledge Discovery
  {\&} Data Mining, {KDD} 2019}}. \bibinfo{publisher}{{ACM}},
  \bibinfo{pages}{2232--2240}.
\newblock


\bibitem[\protect\citeauthoryear{Jiang, Fu, Yang, and Cui}{Jiang
  et~al\mbox{.}}{2018}]%
        {sketchml}
\bibfield{author}{\bibinfo{person}{Jiawei Jiang}, \bibinfo{person}{Fangcheng
  Fu}, \bibinfo{person}{Tong Yang}, {and} \bibinfo{person}{Bin Cui}.}
  \bibinfo{year}{2018}\natexlab{}.
\newblock \showarticletitle{SketchML: Accelerating Distributed Machine Learning
  with Data Sketches}. In \bibinfo{booktitle}{\emph{Proceedings of the 2018
  International Conference on Management of Data, {SIGMOD} 2018}}.
  \bibinfo{publisher}{{ACM}}, \bibinfo{pages}{1269--1284}.
\newblock


\bibitem[\protect\citeauthoryear{Jiang, Fu, Yang, Shao, and Cui}{Jiang
  et~al\mbox{.}}{2020}]%
        {skcompress}
\bibfield{author}{\bibinfo{person}{Jiawei Jiang}, \bibinfo{person}{Fangcheng
  Fu}, \bibinfo{person}{Tong Yang}, \bibinfo{person}{Yingxia Shao}, {and}
  \bibinfo{person}{Bin Cui}.} \bibinfo{year}{2020}\natexlab{}.
\newblock \showarticletitle{SKCompress: compressing sparse and nonuniform
  gradient in distributed machine learning}.
\newblock \bibinfo{journal}{\emph{{VLDB} J.}} \bibinfo{volume}{29},
  \bibinfo{number}{5} (\bibinfo{year}{2020}), \bibinfo{pages}{945--972}.
\newblock


\bibitem[\protect\citeauthoryear{Johnson and Zhang}{Johnson and Zhang}{2013}]%
        {svrg}
\bibfield{author}{\bibinfo{person}{Rie Johnson} {and} \bibinfo{person}{Tong
  Zhang}.} \bibinfo{year}{2013}\natexlab{}.
\newblock \showarticletitle{Accelerating Stochastic Gradient Descent using
  Predictive Variance Reduction}. In \bibinfo{booktitle}{\emph{Annual
  Conference on Neural Information Processing Systems 2013, NeurIPS 2013}}.
  \bibinfo{pages}{315--323}.
\newblock


\bibitem[\protect\citeauthoryear{Kone{\v{c}}n{\'y}, McMahan, Ramage, and
  Richt{\'{a}}rik}{Kone{\v{c}}n{\'y} et~al\mbox{.}}{2016a}]%
        {konevcny2016federated_opt}
\bibfield{author}{\bibinfo{person}{Jakub Kone{\v{c}}n{\'y}},
  \bibinfo{person}{H.~Brendan McMahan}, \bibinfo{person}{Daniel Ramage}, {and}
  \bibinfo{person}{Peter Richt{\'{a}}rik}.} \bibinfo{year}{2016}\natexlab{a}.
\newblock \showarticletitle{Federated Optimization: Distributed Machine
  Learning for On-Device Intelligence}.
\newblock \bibinfo{journal}{\emph{CoRR}}  \bibinfo{volume}{abs/1610.02527}
  (\bibinfo{year}{2016}).
\newblock
\showeprint{1610.02527}


\bibitem[\protect\citeauthoryear{Kone{\v{c}}n{\'y}, McMahan, Yu,
  Richt{\'{a}}rik, Suresh, and Bacon}{Kone{\v{c}}n{\'y} et~al\mbox{.}}{2016b}]%
        {konevcny2016federated_learn}
\bibfield{author}{\bibinfo{person}{Jakub Kone{\v{c}}n{\'y}},
  \bibinfo{person}{H.~Brendan McMahan}, \bibinfo{person}{Felix~X. Yu},
  \bibinfo{person}{Peter Richt{\'{a}}rik}, \bibinfo{person}{Ananda~Theertha
  Suresh}, {and} \bibinfo{person}{Dave Bacon}.}
  \bibinfo{year}{2016}\natexlab{b}.
\newblock \showarticletitle{Federated Learning: Strategies for Improving
  Communication Efficiency}.
\newblock \bibinfo{journal}{\emph{CoRR}}  \bibinfo{volume}{abs/1610.05492}
  (\bibinfo{year}{2016}).
\newblock
\showeprint{1610.05492}


\bibitem[\protect\citeauthoryear{Li, Sun, Yang, Gao, Zhang, Xie, Smith, and
  Wang}{Li et~al\mbox{.}}{2022}]%
        {vfl_label_protect}
\bibfield{author}{\bibinfo{person}{Oscar Li}, \bibinfo{person}{Jiankai Sun},
  \bibinfo{person}{Xin Yang}, \bibinfo{person}{Weihao Gao},
  \bibinfo{person}{Hongyi Zhang}, \bibinfo{person}{Junyuan Xie},
  \bibinfo{person}{Virginia Smith}, {and} \bibinfo{person}{Chong Wang}.}
  \bibinfo{year}{2022}\natexlab{}.
\newblock \showarticletitle{Label Leakage and Protection in Two-party Split
  Learning}. In \bibinfo{booktitle}{\emph{International Conference on Learning
  Representations, {ICLR} 2022}}.
\newblock


\bibitem[\protect\citeauthoryear{Li, Sahu, Talwalkar, and Smith}{Li
  et~al\mbox{.}}{2020a}]%
        {fl_challenge}
\bibfield{author}{\bibinfo{person}{Tian Li}, \bibinfo{person}{Anit~Kumar Sahu},
  \bibinfo{person}{Ameet Talwalkar}, {and} \bibinfo{person}{Virginia Smith}.}
  \bibinfo{year}{2020}\natexlab{a}.
\newblock \showarticletitle{Federated Learning: Challenges, Methods, and Future
  Directions}.
\newblock \bibinfo{journal}{\emph{{IEEE} Signal Process. Mag.}}
  \bibinfo{volume}{37}, \bibinfo{number}{3} (\bibinfo{year}{2020}),
  \bibinfo{pages}{50--60}.
\newblock


\bibitem[\protect\citeauthoryear{Li, Sahu, Zaheer, Sanjabi, Talwalkar, and
  Smith}{Li et~al\mbox{.}}{2020b}]%
        {fedprox}
\bibfield{author}{\bibinfo{person}{Tian Li}, \bibinfo{person}{Anit~Kumar Sahu},
  \bibinfo{person}{Manzil Zaheer}, \bibinfo{person}{Maziar Sanjabi},
  \bibinfo{person}{Ameet Talwalkar}, {and} \bibinfo{person}{Virginia Smith}.}
  \bibinfo{year}{2020}\natexlab{b}.
\newblock \showarticletitle{Federated Optimization in Heterogeneous Networks}.
  In \bibinfo{booktitle}{\emph{Proceedings of Machine Learning and Systems
  2020, MLSys 2020}}. \bibinfo{publisher}{mlsys.org}.
\newblock


\bibitem[\protect\citeauthoryear{Liu, Kang, Zhang, Li, Cheng, Chen, Hong, and
  Yang}{Liu et~al\mbox{.}}{2019}]%
        {fedbcd}
\bibfield{author}{\bibinfo{person}{Yang Liu}, \bibinfo{person}{Yan Kang},
  \bibinfo{person}{Xinwei Zhang}, \bibinfo{person}{Liping Li},
  \bibinfo{person}{Yong Cheng}, \bibinfo{person}{Tianjian Chen},
  \bibinfo{person}{Mingyi Hong}, {and} \bibinfo{person}{Qiang Yang}.}
  \bibinfo{year}{2019}\natexlab{}.
\newblock \showarticletitle{A Communication Efficient Vertical Federated
  Learning Framework}.
\newblock \bibinfo{journal}{\emph{CoRR}}  \bibinfo{volume}{abs/1912.11187}
  (\bibinfo{year}{2019}).
\newblock


\bibitem[\protect\citeauthoryear{Miao, Shi, Zhang, Zhang, Nie, Yang, and
  Cui}{Miao et~al\mbox{.}}{2022}]%
        {het_gmp}
\bibfield{author}{\bibinfo{person}{Xupeng Miao}, \bibinfo{person}{Yining Shi},
  \bibinfo{person}{Hailin Zhang}, \bibinfo{person}{Xin Zhang},
  \bibinfo{person}{Xiaonan Nie}, \bibinfo{person}{Zhi Yang}, {and}
  \bibinfo{person}{Bin Cui}.} \bibinfo{year}{2022}\natexlab{}.
\newblock \showarticletitle{{HET-GMP:} {A} Graph-based System Approach to
  Scaling Large Embedding Model Training}. In
  \bibinfo{booktitle}{\emph{Proceedings of the 2022 International Conference on
  Management of Data, {SIGMOD} 2022}}. \bibinfo{publisher}{{ACM}},
  \bibinfo{pages}{470--480}.
\newblock


\bibitem[\protect\citeauthoryear{Miao, Zhang, Shi, Nie, Yang, Tao, and
  Cui}{Miao et~al\mbox{.}}{2021}]%
        {het}
\bibfield{author}{\bibinfo{person}{Xupeng Miao}, \bibinfo{person}{Hailin
  Zhang}, \bibinfo{person}{Yining Shi}, \bibinfo{person}{Xiaonan Nie},
  \bibinfo{person}{Zhi Yang}, \bibinfo{person}{Yangyu Tao}, {and}
  \bibinfo{person}{Bin Cui}.} \bibinfo{year}{2021}\natexlab{}.
\newblock \showarticletitle{{HET:} Scaling out Huge Embedding Model Training
  via Cache-enabled Distributed Framework}.
\newblock \bibinfo{journal}{\emph{Proc. {VLDB} Endow.}} \bibinfo{volume}{15},
  \bibinfo{number}{2} (\bibinfo{year}{2021}), \bibinfo{pages}{312--320}.
\newblock


\bibitem[\protect\citeauthoryear{Mishchenko, Khaled, and
  Richt{\'{a}}rik}{Mishchenko et~al\mbox{.}}{2020}]%
        {shuffle1}
\bibfield{author}{\bibinfo{person}{Konstantin Mishchenko},
  \bibinfo{person}{Ahmed Khaled}, {and} \bibinfo{person}{Peter
  Richt{\'{a}}rik}.} \bibinfo{year}{2020}\natexlab{}.
\newblock \showarticletitle{Random Reshuffling: Simple Analysis with Vast
  Improvements}. In \bibinfo{booktitle}{\emph{Annual Conference on Neural
  Information Processing Systems 2020, NeurIPS 2020}}.
\newblock


\bibitem[\protect\citeauthoryear{Padhya and Jinwala}{Padhya and
  Jinwala}{2020}]%
        {rookase}
\bibfield{author}{\bibinfo{person}{Mukti Padhya} {and}
  \bibinfo{person}{Devesh~C. Jinwala}.} \bibinfo{year}{2020}\natexlab{}.
\newblock \showarticletitle{{R-OO-KASE:} Revocable Online/Offline Key Aggregate
  Searchable Encryption}.
\newblock \bibinfo{journal}{\emph{Data Sci. Eng.}} \bibinfo{volume}{5},
  \bibinfo{number}{4} (\bibinfo{year}{2020}), \bibinfo{pages}{391--418}.
\newblock


\bibitem[\protect\citeauthoryear{Qian, Min, Lai, Fang, Li, Yao, Lyu, Zhou,
  Chen, and Zhou}{Qian et~al\mbox{.}}{2021}]%
        {gaia_nsdi}
\bibfield{author}{\bibinfo{person}{Zhengping Qian}, \bibinfo{person}{Chenqiang
  Min}, \bibinfo{person}{Longbin Lai}, \bibinfo{person}{Yong Fang},
  \bibinfo{person}{Gaofeng Li}, \bibinfo{person}{Youyang Yao},
  \bibinfo{person}{Bingqing Lyu}, \bibinfo{person}{Xiaoli Zhou},
  \bibinfo{person}{Zhimin Chen}, {and} \bibinfo{person}{Jingren Zhou}.}
  \bibinfo{year}{2021}\natexlab{}.
\newblock \showarticletitle{{GAIA:} {A} System for Interactive Analysis on
  Distributed Graphs Using a High-Level Language}. In
  \bibinfo{booktitle}{\emph{18th {USENIX} Symposium on Networked Systems Design
  and Implementation, {NSDI} 2021}}. \bibinfo{publisher}{{USENIX} Association},
  \bibinfo{pages}{321--335}.
\newblock


\bibitem[\protect\citeauthoryear{Ramezani, Cong, Mahdavi, Sivasubramaniam, and
  Kandemir}{Ramezani et~al\mbox{.}}{2020}]%
        {lazygcn}
\bibfield{author}{\bibinfo{person}{Morteza Ramezani}, \bibinfo{person}{Weilin
  Cong}, \bibinfo{person}{Mehrdad Mahdavi}, \bibinfo{person}{Anand
  Sivasubramaniam}, {and} \bibinfo{person}{Mahmut~T. Kandemir}.}
  \bibinfo{year}{2020}\natexlab{}.
\newblock \showarticletitle{{GCN} meets {GPU:} Decoupling "When to Sample" from
  "How to Sample"}. In \bibinfo{booktitle}{\emph{Annual Conference on Neural
  Information Processing Systems 2020, NeurIPS 2020}}.
\newblock


\bibitem[\protect\citeauthoryear{Recht, R{\'{e}}, Wright, and Niu}{Recht
  et~al\mbox{.}}{2011}]%
        {hogwild}
\bibfield{author}{\bibinfo{person}{Benjamin Recht},
  \bibinfo{person}{Christopher R{\'{e}}}, \bibinfo{person}{Stephen~J. Wright},
  {and} \bibinfo{person}{Feng Niu}.} \bibinfo{year}{2011}\natexlab{}.
\newblock \showarticletitle{Hogwild: {A} Lock-Free Approach to Parallelizing
  Stochastic Gradient Descent}. In \bibinfo{booktitle}{\emph{Annual Conference
  on Neural Information Processing Systems 2011, NeurIPS 2013}}.
  \bibinfo{pages}{693--701}.
\newblock


\bibitem[\protect\citeauthoryear{Stich}{Stich}{2019}]%
        {local_sgd_2}
\bibfield{author}{\bibinfo{person}{Sebastian~U. Stich}.}
  \bibinfo{year}{2019}\natexlab{}.
\newblock \showarticletitle{Local {SGD} Converges Fast and Communicates
  Little}. In \bibinfo{booktitle}{\emph{International Conference on Learning
  Representations, {ICLR} 2019}}.
\newblock


\bibitem[\protect\citeauthoryear{Voigt and Von~dem Bussche}{Voigt and Von~dem
  Bussche}{2017}]%
        {gdpr}
\bibfield{author}{\bibinfo{person}{Paul Voigt} {and} \bibinfo{person}{Axel
  Von~dem Bussche}.} \bibinfo{year}{2017}\natexlab{}.
\newblock \showarticletitle{The eu general data protection regulation (gdpr)}.
\newblock \bibinfo{journal}{\emph{A Practical Guide, 1st Ed., Cham: Springer
  International Publishing}} (\bibinfo{year}{2017}).
\newblock


\bibitem[\protect\citeauthoryear{Wan, Ng, Han, and Lee}{Wan
  et~al\mbox{.}}{2007}]%
        {privacy_grad}
\bibfield{author}{\bibinfo{person}{Li Wan}, \bibinfo{person}{Wee~Keong Ng},
  \bibinfo{person}{Shuguo Han}, {and} \bibinfo{person}{Vincent C.~S. Lee}.}
  \bibinfo{year}{2007}\natexlab{}.
\newblock \showarticletitle{Privacy-preservation for gradient descent methods}.
  In \bibinfo{booktitle}{\emph{Proceedings of the 13th {ACM} {SIGKDD}
  International Conference on Knowledge Discovery {\&} Data Mining, {KDD}
  2007}}. \bibinfo{publisher}{{ACM}}, \bibinfo{pages}{775--783}.
\newblock


\bibitem[\protect\citeauthoryear{Wu, Sun, Zhang, Xie, and Cui}{Wu
  et~al\mbox{.}}{2022}]%
        {gnn_rs}
\bibfield{author}{\bibinfo{person}{Shiwen Wu}, \bibinfo{person}{Fei Sun},
  \bibinfo{person}{Wentao Zhang}, \bibinfo{person}{Xu Xie}, {and}
  \bibinfo{person}{Bin Cui}.} \bibinfo{year}{2022}\natexlab{}.
\newblock \showarticletitle{Graph Neural Networks in Recommender Systems: A
  Survey}.
\newblock \bibinfo{journal}{\emph{ACM Comput. Surv.}} (\bibinfo{date}{mar}
  \bibinfo{year}{2022}).
\newblock
\showISSN{0360-0300}
\newblock
\shownote{Just Accepted.}


\bibitem[\protect\citeauthoryear{Wu, Cai, Xiao, Chen, and Ooi}{Wu
  et~al\mbox{.}}{2020}]%
        {privacy_vfl_tree}
\bibfield{author}{\bibinfo{person}{Yuncheng Wu}, \bibinfo{person}{Shaofeng
  Cai}, \bibinfo{person}{Xiaokui Xiao}, \bibinfo{person}{Gang Chen}, {and}
  \bibinfo{person}{Beng~Chin Ooi}.} \bibinfo{year}{2020}\natexlab{}.
\newblock \showarticletitle{Privacy Preserving Vertical Federated Learning for
  Tree-based Models}.
\newblock \bibinfo{journal}{\emph{Proc. {VLDB} Endow.}} \bibinfo{volume}{13},
  \bibinfo{number}{11} (\bibinfo{year}{2020}), \bibinfo{pages}{2090--2103}.
\newblock


\bibitem[\protect\citeauthoryear{Xie, Sun, Liu, Wu, Gao, Ding, and Cui}{Xie
  et~al\mbox{.}}{2020}]%
        {cl4rec}
\bibfield{author}{\bibinfo{person}{Xu Xie}, \bibinfo{person}{Fei Sun},
  \bibinfo{person}{Zhaoyang Liu}, \bibinfo{person}{Shiwen Wu},
  \bibinfo{person}{Jinyang Gao}, \bibinfo{person}{Bolin Ding}, {and}
  \bibinfo{person}{Bin Cui}.} \bibinfo{year}{2020}\natexlab{}.
\newblock \showarticletitle{Contrastive Learning for Sequential
  Recommendation}.
\newblock \bibinfo{journal}{\emph{CoRR}}  \bibinfo{volume}{abs/2010.14395}
  (\bibinfo{year}{2020}).
\newblock
\showeprint[arXiv]{2010.14395}


\bibitem[\protect\citeauthoryear{Xie, Sun, Yang, Yang, Gao, Ou, and Cui}{Xie
  et~al\mbox{.}}{2021}]%
        {real_time_rec}
\bibfield{author}{\bibinfo{person}{Xu Xie}, \bibinfo{person}{Fei Sun},
  \bibinfo{person}{Xiaoyong Yang}, \bibinfo{person}{Zhao Yang},
  \bibinfo{person}{Jinyang Gao}, \bibinfo{person}{Wenwu Ou}, {and}
  \bibinfo{person}{Bin Cui}.} \bibinfo{year}{2021}\natexlab{}.
\newblock \showarticletitle{Explore User Neighborhood for Real-time E-commerce
  Recommendation}. In \bibinfo{booktitle}{\emph{37th {IEEE} International
  Conference on Data Engineering, {ICDE} 2021}}. \bibinfo{publisher}{{IEEE}},
  \bibinfo{pages}{2464--2475}.
\newblock


\bibitem[\protect\citeauthoryear{Xu, Qiu, Yuan, Jiang, Renggli, Gan, Kara, Li,
  Liu, Wu, Ye, and Zhang}{Xu et~al\mbox{.}}{2022}]%
        {corgipile}
\bibfield{author}{\bibinfo{person}{Lijie Xu}, \bibinfo{person}{Shuang Qiu},
  \bibinfo{person}{Binhang Yuan}, \bibinfo{person}{Jiawei Jiang},
  \bibinfo{person}{C{\'{e}}dric Renggli}, \bibinfo{person}{Shaoduo Gan},
  \bibinfo{person}{Kaan Kara}, \bibinfo{person}{Guoliang Li},
  \bibinfo{person}{Ji Liu}, \bibinfo{person}{Wentao Wu},
  \bibinfo{person}{Jieping Ye}, {and} \bibinfo{person}{Ce Zhang}.}
  \bibinfo{year}{2022}\natexlab{}.
\newblock \showarticletitle{In-Database Machine Learning with CorgiPile:
  Stochastic Gradient Descent without Full Data Shuffle}. In
  \bibinfo{booktitle}{\emph{Proceedings of the 2022 International Conference on
  Management of Data, {SIGMOD} 2022}}. \bibinfo{publisher}{{ACM}},
  \bibinfo{pages}{1286--1300}.
\newblock


\bibitem[\protect\citeauthoryear{Yang, Liu, Chen, and Tong}{Yang
  et~al\mbox{.}}{2019a}]%
        {fl_concepts}
\bibfield{author}{\bibinfo{person}{Qiang Yang}, \bibinfo{person}{Yang Liu},
  \bibinfo{person}{Tianjian Chen}, {and} \bibinfo{person}{Yongxin Tong}.}
  \bibinfo{year}{2019}\natexlab{a}.
\newblock \showarticletitle{Federated Machine Learning: Concept and
  Applications}.
\newblock \bibinfo{journal}{\emph{{ACM} Trans. Intell. Syst. Technol.}}
  \bibinfo{volume}{10}, \bibinfo{number}{2} (\bibinfo{year}{2019}),
  \bibinfo{pages}{12:1--12:19}.
\newblock


\bibitem[\protect\citeauthoryear{Yang, Ren, Zhou, and Liu}{Yang
  et~al\mbox{.}}{2019b}]%
        {vfl_lr}
\bibfield{author}{\bibinfo{person}{Shengwen Yang}, \bibinfo{person}{Bing Ren},
  \bibinfo{person}{Xuhui Zhou}, {and} \bibinfo{person}{Liping Liu}.}
  \bibinfo{year}{2019}\natexlab{b}.
\newblock \showarticletitle{Parallel Distributed Logistic Regression for
  Vertical Federated Learning without Third-Party Coordinator}.
\newblock \bibinfo{journal}{\emph{CoRR}}  \bibinfo{volume}{abs/1911.09824}
  (\bibinfo{year}{2019}).
\newblock
\showeprint{1911.09824}


\bibitem[\protect\citeauthoryear{Zhang, Gu, Deng, Gu, Bo, Pei, and Huang}{Zhang
  et~al\mbox{.}}{2021b}]%
        {async_vfl_3}
\bibfield{author}{\bibinfo{person}{Qingsong Zhang}, \bibinfo{person}{Bin Gu},
  \bibinfo{person}{Cheng Deng}, \bibinfo{person}{Songxiang Gu},
  \bibinfo{person}{Liefeng Bo}, \bibinfo{person}{Jian Pei}, {and}
  \bibinfo{person}{Heng Huang}.} \bibinfo{year}{2021}\natexlab{b}.
\newblock \showarticletitle{AsySQN: Faster Vertical Federated Learning
  Algorithms with Better Computation Resource Utilization}. In
  \bibinfo{booktitle}{\emph{Proceedings of the 27th {ACM} {SIGKDD}
  International Conference on Knowledge Discovery {\&} Data Mining, {KDD}
  2021}}. \bibinfo{publisher}{{ACM}}, \bibinfo{pages}{3917--3927}.
\newblock


\bibitem[\protect\citeauthoryear{Zhang, Gu, Deng, and Huang}{Zhang
  et~al\mbox{.}}{2021a}]%
        {async_vfl_2}
\bibfield{author}{\bibinfo{person}{Qingsong Zhang}, \bibinfo{person}{Bin Gu},
  \bibinfo{person}{Cheng Deng}, {and} \bibinfo{person}{Heng Huang}.}
  \bibinfo{year}{2021}\natexlab{a}.
\newblock \showarticletitle{Secure Bilevel Asynchronous Vertical Federated
  Learning with Backward Updating}. In \bibinfo{booktitle}{\emph{Thirty-Fifth
  {AAAI} Conference on Artificial Intelligence, {AAAI} 2021}}.
  \bibinfo{publisher}{{AAAI} Press}, \bibinfo{pages}{10896--10904}.
\newblock


\bibitem[\protect\citeauthoryear{Zhang, Choromanska, and LeCun}{Zhang
  et~al\mbox{.}}{2015}]%
        {easgd}
\bibfield{author}{\bibinfo{person}{Sixin Zhang}, \bibinfo{person}{Anna
  Choromanska}, {and} \bibinfo{person}{Yann LeCun}.}
  \bibinfo{year}{2015}\natexlab{}.
\newblock \showarticletitle{Deep learning with Elastic Averaging {SGD}}. In
  \bibinfo{booktitle}{\emph{Annual Conference on Neural Information Processing
  Systems 2015, NeurIPS 2015}}. \bibinfo{pages}{685--693}.
\newblock


\bibitem[\protect\citeauthoryear{Zhang, Jiang, Shao, and Cui}{Zhang
  et~al\mbox{.}}{2020}]%
        {snapshot_boosting}
\bibfield{author}{\bibinfo{person}{Wentao Zhang}, \bibinfo{person}{Jiawei
  Jiang}, \bibinfo{person}{Yingxia Shao}, {and} \bibinfo{person}{Bin Cui}.}
  \bibinfo{year}{2020}\natexlab{}.
\newblock \showarticletitle{Snapshot boosting: a fast ensemble framework for
  deep neural networks}.
\newblock \bibinfo{journal}{\emph{Sci. China Inf. Sci.}} \bibinfo{volume}{63},
  \bibinfo{number}{1} (\bibinfo{year}{2020}), \bibinfo{pages}{112102}.
\newblock


\end{thebibliography}

\newif\ifappendix
\appendixtrue 
\ifappendix
\appendix
\section{Proofs}
\label{sec:proof}

\newcommand{\gp}{{g^\prime}}
\newcommand{\bg}{{\bar{g}}}
\newcommand{\tg}{{\tilde{g}}}

\newcommand{\norm}[1]{\left\lVert{#1}\right\rVert}
\newcommand{\normsqr}[1]{\norm{#1}^2}
\newcommand{\innerdot}[2]{\left\langle{#1},{#2}\right\rangle}

\newcommand{\E}[1]{\mathbb{E}\left[{#1}\right]}
\newcommand{\enorm}[1]{\E{\norm{#1}}}
\newcommand{\enormsqr}[1]{\E{\normsqr{#1}}}
\newcommand{\einnderdot}[2]{\E{\innerdot{#1}{#2}}}

This section provides proofs for the analysis in our paper. 

We first introduce an importance lemma that 
constitutes a non-asymptotic bound for sampling.

\begin{lemma}
\label{lemma:subsample}
{[Lemma 10 of~\citet{lazygcn}]}
Let the subsampled function defined as 
\begin{equation*}
	F_{\mathcal{S}}(x) = \frac{1}{S} \sum_{k \in \mathcal{S}} F_k(x), 
	\E{F_{\mathcal{S}}(x)} = F(x), 
\end{equation*}
where $S = \left\vert \mathcal{S} \right\vert$ and 
$F_k(x) \in \mathbb{R}^d$ is $L$-Lipschitz continuous for all $k$. 
For $\eps \leq 2L$, we have with probability at least $1 - \delta$ that 
\begin{equation*}
	\norm{F_{\mathcal{S}}(x) - F(x)} \leq 
		4\sqrt{2}L \sqrt{\frac{\log(2d/\delta) + 1/4}{S}}.
\end{equation*}
\end{lemma}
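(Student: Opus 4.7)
The plan is to write $F_{\mathcal{S}}(x) - F(x) = \frac{1}{S}\sum_{k \in \mathcal{S}}\bigl(F_k(x) - F(x)\bigr)$ as an average of centered random vectors in $\mathbb{R}^d$ and obtain the stated deviation bound through a dimension-aware vector concentration inequality. The signature of the conclusion — an $L/\sqrt{S}$ rate multiplied by $\sqrt{\log(2d/\delta)+1/4}$ — matches either a coordinate-wise Hoeffding plus union bound over the $d$ coordinates of the output, or a vector Bernstein / Hoeffding inequality with explicit dimensional prefactor $2d$. I would use the vector version, since the naive coordinate-wise route introduces an extraneous $\sqrt{d}$ when converting $\ell_\infty$ back to $\ell_2$, whereas the vector form delivers only $\sqrt{\log d}$, as required.

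First I would translate the Lipschitz/boundedness hypothesis into a uniform norm bound $\norm{F_k(x) - F(x)} \leq 2L$, by reading the ``$L$-Lipschitz'' assumption as giving $\norm{F_k(x)} \leq L$ (or deriving such a bound at the fixed evaluation point $x$ from the Lipschitz constant together with a reference value), and then applying the triangle inequality against $F(x) = \E{F_{\mathcal{S}}(x)}$. The role of the hypothesis $\eps \leq 2L$ is to pin us inside the sub-Gaussian regime of the resulting Bernstein-type tail, so that the linear (sub-exponential) correction in the exponent can be absorbed into the quadratic one.

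Then I would invoke a vector Bernstein bound of the form
\begin{equation*}
\Pr\bigl[\norm{F_{\mathcal{S}}(x) - F(x)} \geq t\bigr] \leq 2d \exp\left(-\frac{S t^2 / 2}{v + 2L t / 3}\right),
\end{equation*}
with variance proxy $v \leq 4L^2$ inherited from the per-summand norm bound, and solve for $t$ at tail level $\delta$. In the sub-Gaussian regime this yields $t \lesssim L\sqrt{\log(2d/\delta)/S}$; carefully chasing the constants and absorbing the linear term into a Bernstein-to-Hoeffding conversion produces the additive $1/4$ inside the square root and the prefactor $4\sqrt{2}$. The main obstacle is bookkeeping: tracking the exact constant $4\sqrt{2}$ and the $+1/4$ requires care in balancing the two regimes of the tail bound, and one must verify that $\eps \leq 2L$ is indeed the right strength to make the linear correction negligible. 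Since the lemma is quoted verbatim as Lemma~10 of \citet{lazygcn}, the cleanest finish is to reduce to that reference after checking its hypotheses, but the vector-Bernstein sketch above captures the essential mechanism.
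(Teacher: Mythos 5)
The paper does not prove this lemma at all: it is imported verbatim as Lemma~10 of \citet{lazygcn} and used as a black box, so there is no in-paper argument to compare yours against. Judged on its own, your sketch identifies the right mechanism --- center the summands, bound each $\lVert F_k(x) - F(x)\rVert$ by $2L$, and apply a vector concentration inequality with variance proxy of order $4L^2$ --- and you correctly read $\varepsilon \leq 2L$ as the condition confining the tail to its sub-Gaussian regime. Two points deserve flagging, though.

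First, the step $\lVert F_k(x)\rVert \leq L$ does not follow from $L$-Lipschitz continuity, which controls differences $\lVert F_k(x)-F_k(x')\rVert$ and says nothing about the value at a single point; you acknowledge this and then proceed as if it held. This is really a defect of the lemma as quoted (and of how the paper later applies it to $F_k = \nabla f_k$ under an $L$-smoothness assumption), so an honest self-contained proof must either add a pointwise bound $\lVert F_k(x)\rVert \leq L$ as an explicit hypothesis or route the norm bound through the bounded-moment assumption instead. Second, the specific constants are not produced by the Bennett-ratio form you wrote down: the prefactor $4\sqrt{2} = \sqrt{32}$ together with the additive $+1/4$ inside the square root is the fingerprint of the Gross-type vector Bernstein inequality $\Pr[\lVert\textstyle\sum_i X_i\rVert \geq t] \leq \exp(-t^2/(8V) + 1/4)$, valid for $t \leq V/\mu$ (which is where the hypothesis $\varepsilon \leq 2L$ actually comes from), applied with per-summand bound $\mu = 2L/S$ and $V \leq 4L^2/S$; solving $\exp(-t^2/(8V)+1/4) = \delta$ gives $t = \sqrt{8V(\log(1/\delta)+1/4)}$ exactly, with the $2d$ entering through the dimension-dependent variant. ``Absorbing the linear term'' of a ratio-form tail would not land on these constants, and your sketch does not actually carry out the bookkeeping it promises. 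Your closing move --- reduce to the cited reference after checking its hypotheses --- is precisely what the paper itself does, and is the only fully rigorous option given the statement as written.
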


\medskip

Based on Lemma~\ref{lemma:subsample}, we introduce 
the following lemma that provides an upper bound on 
the stochastic variance. 

\begin{lemma}
\label{lemma:variance}
Under Assumption~\ref{asp:gradient} 
and Assumption~\ref{asp:empirical}, 
for each step $t$, 
with probability at least $1 - \delta$, we have 
\begin{equation*}
	\enormsqr{\tg_t - \nabla f(\theta_t)} \leq 
	O\left(\frac{L^2 \log\left({2d}/{\delta}\right)}{B} 
	\left(1 + \frac{1}{W}\right) 
	+ \sigma^2 (2 - \rho)\right).
\end{equation*}
\end{lemma}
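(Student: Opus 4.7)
The plan is to start from the two-term decomposition already given in the excerpt,
\[
\E{\normsqr{\tg_t - \nabla f(\theta_t)}} \leq 2\E{\normsqr{g_t - \nabla f(\theta_t)}} + 2\E{\normsqr{\tg_t - g_t}},
\]
and bound the two pieces separately. The first term is a pure sampling-variance term and matches the form of Lemma~\ref{lemma:subsample}; the second term is an approximation-error term that I will control using the cosine assumption together with the bounded-moment assumption.

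For the sampling term, I would introduce the intermediate gradient over the workset, $\gp_t = \frac{1}{|\mathcal{W}|}\sum_{k\in\mathcal{W}}\nabla f_k(\theta_t)$, and split
\[
\normsqr{g_t - \nabla f(\theta_t)} \leq 2\normsqr{g_t - \gp_t} + 2\normsqr{\gp_t - \nabla f(\theta_t)}.
\]
Under Assumption~\ref{asp:gradient}, $\mathcal{B}$ is a uniform subsample (of size $B$) of $\mathcal{W}$, and $\mathcal{W}$ is a uniform subsample (of total size $BW$, since it caches $W$ mini-batches of size $B$) of $\mathcal{D}$. Each of $\nabla f_k$ is $L$-Lipschitz by Assumption~\ref{asp:empirical}, so applying Lemma~\ref{lemma:subsample} twice, with a union bound, gives, with probability at least $1-\delta$,
\[
\normsqr{g_t - \gp_t} \leq O\!\left(\frac{L^2 \log(2d/\delta)}{B}\right),\qquad
\normsqr{\gp_t - \nabla f(\theta_t)} \leq O\!\left(\frac{L^2 \log(2d/\delta)}{BW}\right),
\]
whose sum collapses to $O\!\bigl(\frac{L^2 \log(2d/\delta)}{B}(1+\frac{1}{W})\bigr)$, the first summand in the claim.

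For the approximation term, I would expand
\[
\normsqr{\tg_t - g_t} = \normsqr{\tg_t} + \normsqr{g_t} - 2\innerdot{\tg_t}{g_t}
\leq \normsqr{\tg_t} + \normsqr{g_t} - 2\rho\,\norm{\tg_t}\norm{g_t},
\]
using the correct-direction assumption $\cos(\tg_t,g_t)\geq\rho$. Taking expectations, the bounded-moment assumption gives $\E{\normsqr{g_t}}\leq\sigma^2$ and, since $\tg_t$ is a weighted combination of the same $\nabla f_k$ terms (through the instance weighting mechanism in Algorithm~\ref{alg:local_update}), the same bound $\E{\normsqr{\tg_t}}\leq\sigma^2$ holds. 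Combining these with $\E{\norm{\tg_t}\norm{g_t}}\geq 0$ and, when needed, AM--GM or Cauchy--Schwarz to re-express $\norm{\tg_t}\norm{g_t}$, yields $\E{\normsqr{\tg_t - g_t}} \leq O\bigl(\sigma^2(2-\rho)\bigr)$, the second summand. Adding the two bounds and absorbing the factor of $2$ into the $O(\cdot)$ completes the proof.

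The main obstacle I anticipate is the approximation-term step: the cosine assumption controls the direction of $\tg_t$ but not its magnitude, so a naive expansion leaves a cross term $\E{\norm{\tg_t}\norm{g_t}}$ that is hard to bound tightly in both directions. I expect to resolve it by arguing, from the construction of $\tg_t$ as a (possibly reweighted) sum of per-instance gradients at the same $\theta_t$, that $\tg_t$ inherits the $\sigma^2$ second-moment bound, and then using Cauchy--Schwarz to collapse the cross term into $\sigma^2$; the residual constant $(2-\rho)$ then drops out cleanly inside the $O(\cdot)$.
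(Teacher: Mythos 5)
Your overall decomposition and your treatment of the sampling term follow the paper's proof essentially verbatim: the paper likewise introduces the workset-averaged gradient, applies the subsampling bound (Lemma~\ref{lemma:subsample}) once for $\mathcal{B}$ drawn from $\mathcal{W}$ (size $B$) and once for $\mathcal{W}$ drawn from $\mathcal{D}$ (size $WB$), and combines the two pieces with $\lVert x+y\rVert^2\le 2\lVert x\rVert^2+2\lVert y\rVert^2$; that half is fine.

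The gap is in the approximation term, and it is exactly the obstacle you flagged. After expanding $\lVert\tilde g_t-g_t\rVert^2\le\lVert\tilde g_t\rVert^2+\lVert g_t\rVert^2-2\rho\,\lVert\tilde g_t\rVert\lVert g_t\rVert$, the cross term enters with a negative sign, so to keep the benefit of $\rho$ you need a \emph{lower} bound on $\lVert\tilde g_t\rVert\lVert g_t\rVert$; Cauchy--Schwarz and AM--GM only provide \emph{upper} bounds on it, so your proposed resolution (``collapse the cross term into $\sigma^2$'') points the inequality in the wrong direction. If instead you simply discard the cross term via $\mathbb{E}[\lVert\tilde g_t\rVert\lVert g_t\rVert]\ge 0$, you obtain $2\sigma^2$, which is numerically consistent with the stated $O(\cdot)$ only because $2-\rho\ge 1$; it renders the $\rho$-dependence vacuous and hence does not establish the substance of the lemma (nor support the paper's remark that a larger $\rho$ accelerates convergence). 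The paper's missing ingredient is an algebraic regrouping rather than Cauchy--Schwarz: write $\lVert\tilde g\rVert^2+\lVert g\rVert^2-2\rho\lVert\tilde g\rVert\lVert g\rVert=(1-\rho)\bigl(\lVert\tilde g\rVert^2+\lVert g\rVert^2\bigr)+\rho\bigl(\lVert\tilde g\rVert-\lVert g\rVert\bigr)^2$, then use $\bigl(\lVert\tilde g\rVert-\lVert g\rVert\bigr)^2\le\max\bigl\{\lVert\tilde g\rVert^2,\lVert g\rVert^2\bigr\}$ (valid because norms are non-negative) to get $\lVert\tilde g-g\rVert^2\le(2-\rho)\max\bigl\{\lVert\tilde g\rVert^2,\lVert g\rVert^2\bigr\}$, and finish with the bounded-moment assumption, which yields $\sigma^2(2-\rho)$ with no big-O slack. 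A minor further point: your justification that $\mathbb{E}\lVert\tilde g_t\rVert^2\le\sigma^2$ because $\tilde g_t$ is a combination of the $\nabla f_k(\theta_t)$ is slightly off, since $\tilde g_t$ is built from stale statistics, i.e., gradient pieces evaluated at earlier parameters; it is still covered because the bounded-moment assumption holds for every $\theta$, which is how the paper implicitly uses it as well.
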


\begin{proof}
For simplicity, we omit the subscript of step $t$ in the proof. 
Since the inequality 
$\normsqr{x + y} \leq 2\normsqr{x} + 2\normsqr{y}$ 
holds, 
we only need to prove two claims. 

\begin{claim}
\label{claim:sampling}
Under the uniform sampling assumption 
and the Lipschitz assumption 
(Assumption~\ref{asp:gradient}(1) and  
Assumption~\ref{asp:empirical}(1)), 
with probability at least $1 - \delta$, we have  
\begin{equation*}
	\enormsqr{g - \nabla f(\theta)} \leq 
	O\left(\frac{L^2 \log\left({2d}/{\delta}\right)}{B} 
	\left(1 + \frac{1}{W}\right)\right).
\end{equation*}
\end{claim}

\begin{subproof}[Proof of Claim~\ref{claim:sampling}]
Since $g$ is computed over a mini-batch $\mathcal{B}$ 
subsampled from the workset $\mathcal{W}$, 
by denoting the gradients over $\mathcal{W}$ as 
$\bg = \frac{1}{WB} \sum_{k\in\mathcal{W}} \nabla f(\theta)$, 
we have
\begin{equation*}
	\mathbb{E}_{\mathcal{B}\sim\mathcal{W}}[g] = \bg, \;
	\mathbb{E}_{\mathcal{W}\sim\mathcal{D}}[\bg] = \nabla f(\theta).
\end{equation*}

Based on the $L$-Lipschitz assumption on $\nabla f$ 
and Lemma~\ref{lemma:subsample}, 
with probability at least $1 - \delta$, 
we have
\begin{equation*}
\begin{aligned}
\norm{g - \bg} 
&= \norm{\frac{1}{B} \sum_{k\in\mathcal{B}} \nabla f_k(\theta) 
	- \mathbb{E}_{\mathcal{B}\sim\mathcal{W}}\left[
		\frac{1}{B} \sum_{k\in\mathcal{B}} \nabla f_k(\theta)\right]} \\
&\leq 4\sqrt{2}L \sqrt{\frac{\log(2d/\delta) + 1/4}{B}}, 
\end{aligned}
\end{equation*}
and 
\begin{equation*}
\begin{aligned}
\norm{\bg - \nabla f(\theta)} 
&= \norm{\frac{1}{WB} \sum_{k\in\mathcal{W}} \nabla f_k(\theta) 
	- \mathbb{E}_{\mathcal{W}\sim\mathcal{D}}\left[
		\frac{1}{WB} \sum_{k\in\mathcal{W}} \nabla f_k(\theta)\right]} \\
&\leq 4\sqrt{2}L \sqrt{\frac{\log(2d/\delta) + 1/4}{WB}}, 
\end{aligned}
\end{equation*}

Denoting 
$\mathbb{E}_{\mathcal{W}\sim\mathcal{D}}\left[
	\mathbb{E}_{\mathcal{B}\sim\mathcal{W}} \left[\cdot\right]
\right]$ 
as $\mathbb{E}\left[\cdot\right]$ 
for simplicity, with probability at least $1 - \delta$, we have 
\begin{equation*}
\begin{aligned}
\mathbb{E}\left[\normsqr{g - \nabla f(\theta)}\right] 
&\leq 2\mathbb{E}\left[\normsqr{g - \bg}\right] 
	+ 2\mathbb{E}\left[\normsqr{\bg - \nabla f(\theta)}\right] \\
&\leq \frac{64L^2 \left(\log\left({2d}/{\delta}\right) + {1}/{4}\right)}{B}
	\left(1 + \frac{1}{W}\right).
\end{aligned}
\end{equation*}
Given that $2d/\delta \gg e^{1/4}$, we have 
\begin{equation*}
\mathbb{E}\left[\normsqr{g - \nabla f(\theta)}\right] 
\leq O\left(\frac{L^2 \log\left({2d}/{\delta}\right)}{B} 
	\left(1 + \frac{1}{W}\right)\right)
\end{equation*}
This completes the proof of Claim~\ref{claim:sampling}.
\end{subproof}

\begin{claim}
\label{claim:estimation}
Under the correct direction assumption 
and the bounded moment assumption 
(Assumption~\ref{asp:gradient}(2) and 
Assumption~\ref{asp:empirical}(2)), we have 
\begin{equation*}
	\enormsqr{\tg - g} \leq \sigma^2 (2 - \rho).
\end{equation*}
\end{claim}

\begin{subproof}[Proof of Claim~\ref{claim:estimation}]
The correct direction assumption implies 
\begin{equation*}
\begin{aligned}
\normsqr{\tg - g} 
&= \normsqr{\tg} + \normsqr{g} - 2\innerdot{\tg}{g} \\
&= \normsqr{\tg} + \normsqr{g} - 2\cos(\tg, g)\norm{\tg}\norm{g} \\
&\leq \normsqr{\tg} + \normsqr{g} - 2\rho\norm{\tg}\norm{g} \\
&= (1 - \rho)(\normsqr{\tg} + \normsqr{g}) 
	+ \rho(\norm{\tg} - \norm{g})^2.
\end{aligned}
\end{equation*}
Since the norms of gradients ($\norm{\tg}$ and $\norm{g}$) 
are non-negative, the inequality 
$(\norm{\tg} - \norm{g})^2 \leq 
\max\left\{\normsqr{\tg}, \normsqr{g}\right\}$ holds. 
Therefore, 
\begin{equation*}
\begin{aligned}
\normsqr{\tg - g} 
&\leq (1 - \rho)(\normsqr{\tg} + \normsqr{g}) 
	+ \rho(\norm{\tg} - \norm{g})^2 \\
&\leq 2(1 - \rho) \max\left\{\normsqr{\tg}, \normsqr{g}\right\} 
	+ \rho \max\left\{\normsqr{\tg}, \normsqr{g}\right\} \\
&= (2 - \rho) \max\left\{\normsqr{\tg}, \normsqr{g}\right\}. 
\end{aligned}
\end{equation*}
Combining with the bounded moment assumption, we have 
\begin{equation*}
\enormsqr{\tg - g} \leq \sigma^2 (2 - \rho).
\end{equation*}
This completes the proof of Claim~\ref{claim:estimation}.
\end{subproof}

Finally, combining Claim~\ref{claim:sampling}, 
Claim~\ref{claim:estimation} and the inequality that 
$\normsqr{x + y} \leq 2\normsqr{x} + 2\normsqr{y}$, 
we complete the proof of Lemma~\ref{lemma:variance}. 
\end{proof}

\medskip

Next, we present the proof for Theorem~\ref{thm:convergence}.

\setcounter{theorem}{0}
\begin{theorem}
\label{proof:convergence}
Under Assumption~\ref{asp:gradient},\ref{asp:empirical} 
and setting the step size appropriately, 
after $T$ steps, select $\bar{\theta}_{T}$ randomly from 
$\{\theta_0, \theta_1, \dots, \theta_{T-1}\}$. 
With probability at least $1 - \delta$, we have 
\begin{equation*}
	\mathbb{E}\left[\lVert{\nabla f(\bar{\theta}_T)}\rVert^2\right]
    \leq O\left(\sqrt{{\left(\frac{L^2 \log\left({2d}/{\delta}\right)}{B} 
	\left(1 + \frac{1}{W}\right) 
	+ \sigma^2 (2 - \rho)\right)}/{T}}\right).
\end{equation*}
\end{theorem}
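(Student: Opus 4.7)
The plan is to run the standard non-convex descent argument for $L$-smooth functions, but using the bias-tolerant bound from Lemma~\ref{lemma:variance} in place of the usual unbiased variance bound. Starting from Assumption~\ref{asp:empirical}(1), I would first derive the descent inequality
\[
f(\theta_{t+1}) \leq f(\theta_t) - \eta \innerdot{\nabla f(\theta_t)}{\tg_t} + \tfrac{L\eta^2}{2}\normsqr{\tg_t},
\]
and then rewrite the inner product as $\innerdot{\nabla f(\theta_t)}{\tg_t} = \normsqr{\nabla f(\theta_t)} + \innerdot{\nabla f(\theta_t)}{\tg_t - \nabla f(\theta_t)}$. Applying Young's inequality $2\innerdot{a}{b}\le\normsqr{a}+\normsqr{b}$ to the second piece and using $\normsqr{\tg_t}\le 2\normsqr{\nabla f(\theta_t)}+2\normsqr{\tg_t-\nabla f(\theta_t)}$ for the last term on the right will leave me with a clean recursion in which $\normsqr{\nabla f(\theta_t)}$ appears with a positive coefficient on the LHS and $\normsqr{\tg_t - \nabla f(\theta_t)}$ is the only remaining noise term.

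Next I would take expectations and invoke Lemma~\ref{lemma:variance} to replace $\enormsqr{\tg_t - \nabla f(\theta_t)}$ by $O(\Delta)$ uniformly in $t$, where
\[
\Delta = \frac{L^2 \log(2d/\delta)}{B}\left(1 + \frac{1}{W}\right) + \sigma^2(2-\rho).
\]
Telescoping the resulting inequality from $t=0$ to $T-1$, the $f(\theta_t)-f(\theta_{t+1})$ terms collapse into $f(\theta_0)-f(\theta_T)\le f(\theta_0)-f(\theta^*)$ by Assumption~\ref{asp:empirical}(3), which yields
\[
\frac{1}{T}\sum_{t=0}^{T-1}\enormsqr{\nabla f(\theta_t)} \leq O\!\left(\frac{f(\theta_0)-f(\theta^*)}{\eta T} + \eta L \Delta\right).
\]
Since $\bar{\theta}_T$ is sampled uniformly from $\{\theta_t\}_{t=0}^{T-1}$, the LHS equals $\enormsqr{\nabla f(\bar{\theta}_T)}$. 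Balancing the two terms by choosing $\eta = \Theta(1/\sqrt{L\Delta T})$ gives exactly the claimed rate $O(\sqrt{\Delta/T})$, and the high-probability qualifier $1-\delta$ is inherited from Lemma~\ref{lemma:variance}.

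The main obstacle is the treatment of the bias in $\tg_t$: the usual SGD analysis exploits $\E{\tg_t}=\nabla f(\theta_t)$ to kill the cross-term $\innerdot{\nabla f(\theta_t)}{\tg_t-\nabla f(\theta_t)}$ in expectation, which is unavailable here. I expect the Young-type splitting above to absorb this cross-term at the price of a constant factor, but I need to be careful that the coefficient in front of $\normsqr{\nabla f(\theta_t)}$ remains strictly positive after both the cross-term and the $\normsqr{\tg_t}$ bound are folded in; this constrains the step size to $\eta \leq c/L$ for a small constant $c$, which is automatically satisfied by the final choice $\eta = \Theta(1/\sqrt{L\Delta T})$ once $T$ is large enough. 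A second subtlety is that Lemma~\ref{lemma:variance} only gives the bound with probability $1-\delta$ per step; I would either take a union bound over $T$ steps (absorbing a $\log T$ factor into $\delta$) or, more cleanly, apply the lemma once to the single randomly selected index $\bar{\theta}_T$, which is why the theorem is stated for a randomly drawn iterate rather than the final one.
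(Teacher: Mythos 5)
Your plan follows the same architecture as the paper's own proof: the $L$-smoothness descent inequality, the decomposition $\langle \nabla f(\theta_t), \tilde{g}_t\rangle = \lVert\nabla f(\theta_t)\rVert^2 + \langle \nabla f(\theta_t), \tilde{g}_t - \nabla f(\theta_t)\rangle$, Lemma~\ref{lemma:variance} to bound $\mathbb{E}\lVert\tilde{g}_t-\nabla f(\theta_t)\rVert^2$ by $\Delta$, telescoping with Assumption~\ref{asp:empirical}(3), a uniformly random iterate, and a step size balanced as $\eta=\Theta(1/\sqrt{L\Delta T})$ (the paper takes $\eta=\min\{3/(2L),\sqrt{(f(\theta_0)-f(\theta^*))/(LTV)}\}$). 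The difference sits exactly at the step you flagged as the main obstacle, and there the proposal does not deliver what it claims. With your treatment, namely $-\eta\langle\nabla f(\theta_t),\tilde{g}_t-\nabla f(\theta_t)\rangle \le \frac{\eta}{2}\lVert\nabla f(\theta_t)\rVert^2+\frac{\eta}{2}\lVert\tilde{g}_t-\nabla f(\theta_t)\rVert^2$ together with $\lVert\tilde{g}_t\rVert^2\le 2\lVert\nabla f(\theta_t)\rVert^2+2\lVert\tilde{g}_t-\nabla f(\theta_t)\rVert^2$, the per-step inequality becomes $f(\theta_{t+1})\le f(\theta_t)-(\frac{\eta}{2}-L\eta^2)\lVert\nabla f(\theta_t)\rVert^2+(\frac{\eta}{2}+L\eta^2)\lVert\tilde{g}_t-\nabla f(\theta_t)\rVert^2$: the error term carries a coefficient of order $\eta$, not $L\eta^2$. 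After dividing by $\Theta(\eta T)$ you get $\frac{1}{T}\sum_t\mathbb{E}\lVert\nabla f(\theta_t)\rVert^2\le O\bigl(\frac{f(\theta_0)-f(\theta^*)}{\eta T}+\Delta\bigr)$ rather than the recursion $O\bigl(\frac{f(\theta_0)-f(\theta^*)}{\eta T}+\eta L\Delta\bigr)$ you wrote down, and no reweighting of Young's inequality fixes this: keeping the gradient coefficient at $\Theta(\eta)$ forces the error coefficient to stay $\Theta(\eta)$ as well, because $\tilde{g}_t$ is biased and the cross-term cannot be killed in expectation. Your balancing then yields $O(\sqrt{\Delta/T}+\Delta)$, i.e., convergence to a noise ball of radius $O(\Delta)$, which is strictly weaker than the stated $O(\sqrt{\Delta/T})$.

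For comparison, the paper reaches the $L\eta^2$ error coefficient by substituting $\lVert\tilde{g}_t\rVert^2=\lVert\tilde{g}_t-\nabla f(\theta_t)\rVert^2+\lVert\nabla f(\theta_t)\rVert^2+2\langle\tilde{g}_t-\nabla f(\theta_t),\nabla f(\theta_t)\rangle$, collecting the cross-terms into $(L\eta^2-\eta)\langle\tilde{g}_t-\nabla f(\theta_t),\nabla f(\theta_t)\rangle$, and bounding that by $(L\eta^2-\eta)\cdot\frac{1}{2}(\lVert\tilde{g}_t-\nabla f(\theta_t)\rVert^2+\lVert\nabla f(\theta_t)\rVert^2)$ --- a step whose direction is only valid when $L\eta^2-\eta\ge 0$, i.e., $\eta\ge 1/L$, which the final step-size choice violates for large $T$. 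So your instinct that the biased cross-term is the crux is correct, but the proposal as written does not close it; obtaining a genuine $O(\sqrt{\Delta/T})$ rate requires either (near-)unbiasedness of $\tilde{g}_t$ or an argument beyond plain Young absorption. One further small point: Lemma~\ref{lemma:variance} is a per-step, probability-$(1-\delta)$ statement, and your telescoped bound uses it at all $T$ steps simultaneously, so a union bound (with the attendant $\log T$ inflation of $\log(2d/\delta)$) is the right repair; applying the lemma only at the randomly selected iterate is not enough, because the error terms of every step appear in the sum.
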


\begin{proof}
By Taylor's Expansion Formula with Lagrangian Remainder, 
we have 
\begin{equation*}
\begin{aligned}
\label{eq:erm}
f(\theta_{t+1}) 
&= f(\theta_t - \eta \tg_t) \\
&\leq f(\theta_t) - \eta \innerdot{\tg_t}{\nabla f(\theta_t)} 
	+ \frac{L}{2} \eta^2 \normsqr{\tg_t} \\
&= f(\theta_t) - \eta \innerdot{\nabla f(\theta_t) + 
			\tg_t - \nabla f(\theta_t)}{\nabla f(\theta_t)}
	+ \frac{L}{2} \eta^2 \normsqr{\tg_t} \\
&= f(\theta_t) - \eta \normsqr{\nabla f(\theta_t)} 
	- \eta \innerdot{\tg_t - \nabla f(\theta_t)}{\nabla f(\theta_t)} \\
	&\;\;\;+ \frac{L}{2} \eta^2 \normsqr{\tg_t},
\end{aligned}
\end{equation*}
where the first inequality is due to 
the property of Lipschitz continuity. 
By substituting $\normsqr{\tg_t}$ as 
\begin{equation*}
\begin{aligned}
\normsqr{\tg_t} 
&= \normsqr{\tg_t - \nabla f(\theta_t) + \nabla f(\theta_t)} \\
&= \normsqr{\tg_t - \nabla f(\theta_t)} + \normsqr{\nabla f(\theta_t)} 
	+ 2\innerdot{\tg_t - \nabla f(\theta_t)}{\nabla f(\theta_t)}, 
\end{aligned}
\end{equation*}
we have 
\begin{equation*}
\begin{aligned}
f(\theta_{t+1}) 
&\leq f(\theta_t) 
	+ (L\eta^2 - \frac{3}{2}\eta) 
		\normsqr{\nabla f(\theta_t)} \\
	&\;\;\;+ (L\eta^2 - \frac{1}{2}\eta) 
		\normsqr{\tg_t - \nabla f(\theta_t)} \\
&\leq f(\theta_t) 
	+ (L\eta^2 - \frac{3}{2}\eta) 
		\normsqr{\nabla f(\theta_t)} \\
	&\;\;\;+ L\eta^2 \normsqr{\tg_t - \nabla f(\theta_t)}.
\end{aligned}
\end{equation*}
Taking expectation on both sides, we have 
\begin{equation*}
\begin{aligned}
(\frac{3}{2}\eta - L\eta^2 ) \enormsqr{\nabla f(\theta_t)}
&\leq \E{f(\theta_t)} - \E{f(\theta_{t+1})} \\
	&\;\;\;+ L\eta^2 \enormsqr{\tg_t - \nabla f(\theta_t)}
\end{aligned}
\end{equation*}
Summing over $T$ steps and dividing both sides by 
$(\frac{3}{2}\eta - \eta^2 L) T$, 
and noticing that $\bar{\theta}_{T}$ is selected randomly 
from $\{\theta_0, \theta_1, \dots, \theta_{T - 1}\}$, we have 
\begin{equation*}
\begin{aligned}
\enormsqr{\nabla f(\bar{\theta}_T)} 
&= \frac{1}{T} \sum_{t=0}^{T-1} \enormsqr{\nabla f(\theta_t)} \\
&\leq \frac{2}{(3\eta - L2\eta^2) T} 
	(f(\theta_0) - \mathbb{E}[f(\theta_t)]) \\
& \;\;\;+ \frac{2L\eta}{(3 - 2L\eta) T}
	\sum_{t=0}^{T-1} \enormsqr{\tg_t - \nabla f(\theta_t)} \\
&\leq \frac{2}{(3\eta - L2\eta^2) T} (f(\theta_0) - f(\theta^*)) \\
& \;\;\;+ \frac{2L\eta}{(3 - 2L\eta) T}
	\sum_{t=0}^{T-1} \enormsqr{\tg_t - \nabla f(\theta_t)} \\
\end{aligned}
\end{equation*}

Denote $V = \frac{1}{T} \sum_{t=0}^{T-1} 
	\enormsqr{\tg_t - \nabla f(\theta_t)}$. 
According to Lemma~\ref{lemma:variance}, 
$V \leq O\left(\frac{L^2 \log\left({2d}/{\delta}\right)}{B} 
	\left(1 + \frac{1}{W}\right) 
	+ \sigma^2 (2 - \rho)\right)$ 
satisfies with probability at least $1 - \delta$. 
Therefore, letting $\eta = \min\left\{ 
\frac{3}{2L}, \sqrt{\frac{f(\theta_0) - f(\theta^*)}{LTV}} 
\right\}$, we have
\begin{equation*}
\begin{aligned}
\enormsqr{\nabla f(\bar{\theta}_T)} 
&\leq \frac{2(f(\theta_0) - f(\theta^*))}{(3\eta - L2\eta^2) T}  
	+ \frac{2L\eta V}{(3 - 2L\eta)} 
\leq O\left(\sqrt{\frac{V}{T}}\right) \\
&\leq O\left(\sqrt{\frac{\frac{L^2 \log\left({2d}/{\delta}\right)}{B} 
	\left(1 + \frac{1}{W}\right) 
	+ \sigma^2 (2 - \rho)}{T}}\right). 
\end{aligned}
\end{equation*}

\end{proof}

\fi

\end{document}
\endinput